\documentclass{article}
\usepackage[preprint]{neurips_2026}
\usepackage[utf8]{inputenc}
\usepackage[T1]{fontenc}
\usepackage{hyperref}
\usepackage{url}
\usepackage{booktabs}
\usepackage{amsfonts}
\usepackage{amsmath}
\usepackage{amssymb}
\usepackage{amsthm}
\usepackage{nicefrac}
\usepackage{microtype}
\usepackage{xcolor}
\usepackage{natbib}
\usepackage{tikz}
\usepackage{graphicx}
\usepackage[table]{xcolor}
\usepackage{multirow}
\usepackage{algorithm}
\usepackage{algpseudocode}
\usepackage{graphicx}
\usepackage{xcolor}
\usepackage{multirow}
\usepackage{array}
\usepackage{siunitx}
\usepackage[table]{xcolor}
\usepackage{colortbl}
\usepackage{caption}
\usepackage{pifont}
\usepackage{wrapfig}
\usepackage{caption}

\usepackage{tikz}
\usepackage{hyperref}

\usetikzlibrary{arrows.meta,decorations.pathreplacing}
\usetikzlibrary{arrows.meta, calc, patterns, decorations.pathreplacing}
\definecolor{spanfill}{RGB}{120,170,230}
\definecolor{spanfill2}{RGB}{255,200,120}
\definecolor{greylight}{RGB}{170,170,170}
\definecolor{keyblue}{RGB}{30,100,190}
\definecolor{valred}{RGB}{200,60,60}
\definecolor{residgreen}{RGB}{30,150,80}
\definecolor{projgrey}{RGB}{130,130,130}
\definecolor{bgblue}{RGB}{235,242,255}

\usepackage{booktabs}
\usepackage[table]{xcolor}
\usepackage{array}
\usepackage{amsmath}
\usepackage{siunitx}

\usepackage{wrapfig}
\usepackage{float}
\usepackage{booktabs}
\usepackage{multirow}
\usepackage{graphicx}
\usepackage[table]{xcolor}

\definecolor{oursrow}{HTML}{F3F7EE}
\definecolor{bestgreen}{HTML}{1F7A4D}
\definecolor{worstred}{HTML}{B03A3A}
\definecolor{oursedge}{HTML}{4A8B3A}
\definecolor{StepBlue}{HTML}{D9E2F3}
\definecolor{ChosenYellow}{HTML}{FCEFCB}
\definecolor{mygreen}{RGB}{0,140,0}

\newcommand{\best}[1]{\textbf{\textcolor{bestgreen}{#1}}}

\newcommand{\R}{\mathbb{R}}
\newcommand{\kv}{\mathbf{k}}
\newcommand{\vv}{\mathbf{v}}

\newcommand{\rv}{\mathbf{r}}

\newcommand{\Rset}{\mathcal{R}}
\newcommand{\Oset}{\mathcal{O}}
\newcommand{\St}{\mathcal{S}_t}
\newcommand{\norm}[1]{\left\|#1\right\|}

\newcommand{\proj}{\Pi}
\newcommand{\spanop}{\mathrm{span}}
\newcommand{\argmax}{\operatorname*{arg\,max}}
\newcommand{\argmin}{\operatorname*{arg\,min}}

\newtheorem{theorem}{Theorem}
\newtheorem{proposition}[theorem]{Proposition}

\title{
CoRDS: Coreset-based Representative and Diverse Selection for Streaming Video Understanding
\thanks{Code is publicly available at \url{https://github.com/ailarmhz/CoRDS.git}.}
}

\author{
  Ailar Mahdizadeh$^{1,2}$ \quad
  Puria Azadi$^{1}$ \quad
  Muchen Li$^{1,2}$ \quad
  Xiangteng He$^{1,2}$ \quad
  Leonid Sigal$^{1,2}$ \\
  $^{1}$University of British Columbia \\
  $^{2}$Vector Institute \\
  \texttt{ailar.mahdizadeh@ubc.ca}
}

\begin{document}
\maketitle

%% ============================================================

\begin{abstract}
Streaming video understanding with large vision-language models (VLMs) requires a compact memory that can support future reasoning over an ever-growing visual history. 
A common solution is to compress the key--value (KV) cache, but existing streaming methods typically rely on local token-wise heuristics, such as recency, temporal redundancy, or saliency, which do not explicitly optimize whether the retained cache is representative of the accumulated history. 
We propose to view KV-cache compression as a \emph{coreset selection} problem: rather than scoring tokens independently for retention, we select a small subset that covers the geometry of the accumulated visual cache. 
Our method operates in a joint KV representation and introduces a bicriteria objective that balances coverage in key and value spaces, preserving both retrieval structure and output-relevant information. 
To encourage a more diverse retained subset, we further introduce an orthogonality-driven diversity criterion that favors candidates contributing new directions beyond the current selection, and connect this criterion to log-determinant subset selection. 
Across four open-source VLMs and five long-video and streaming-video benchmarks, our method improves over heuristic streaming compression baselines under a fixed cache budget.
These results highlight that representative coreset selection offers a more effective principle, than token-wise pruning, for memory-constrained streaming video understanding.
\end{abstract}
%% ============================================================
\section{Introduction}
\label{sec:intro}

Large vision-language models (VLMs) have rapidly expanded the scope of visual
reasoning, enabling a single multimodal backbone to handle open-ended queries
over images, multi-image inputs, and long video sequences
\citep{li2024llavaonevision, qwen2vl}.
Recent advances in long-context modeling have further shown that extending the effective
context window of the language backbone can substantially improve long-video
reasoning. 
For example, LongVA~\citep{zhang2024longva} processes up to 2000 frames, or more than 200K
visual tokens, without video-specific retraining.
These developments make it increasingly plausible to deploy VLMs in always-on,
real-time settings such as streaming assistants, wearable companions, AR/VR
interfaces, embodied service robots, and other persistent perception agents that
must interpret video as it arrives rather than after the full clip has been
observed \citep{kim2025infinipotv}.

Streaming video understanding (SVU) differs fundamentally from conventional
offline video understanding (OVU).
In OVU, the full video is available before inference, enabling frame sampling,
token compression, or evidence retrieval with complete temporal
context; 
in some settings, these operations can even be conditioned on the downstream query
\citep{qian2024videostreaming, longvu}. In SVU, by contrast, future frames are unavailable, queries may arrive asynchronously, and all
pre-query processing must be causal and independent of the query
\citep{kim2025infinipotv, streamingbench, zhang2024flashvstream}. Thus, memory must remain bounded over unbounded streams, latency must support
interactive use, and past evidence must be retained without re-encoding.

% In SVU, by contrast, future frames are unavailable, questions may arrive asynchronously, and all
% pre-query processing must be causal and independent of the eventual query
% \citep{kim2025infinipotv, lin2024streamingbench, zhang2024flashvstream}.
% Thus, memory must remain bounded over unbounded streams, latency must support
% interactive use, and past evidence must be retained without re-encoding.

Prior work has sought to mitigate these constraints through \emph{frame sampling}
\citep{devnani2026trainingfree, tang2025adaptivekeyframesamplinglong},
\emph{input-side vision compression}
\citep{jiang2025storm, qian2024videostreaming, longvu},
and \emph{KV-cache compression}
\citep{di2025rekv, kim2025infinipotv, yang2025streammem}.
These approaches substantially improve the feasibility of long-horizon video
reasoning, yet they all face the same design tension: 
\emph{which frames, tokens, or
cached states deserve to be kept when only a tiny fraction of the visual history can
remain active?}
Existing methods typically answer this question using local token-level heuristics, such as recency, temporal redundancy, norm-based saliency, or attention-derived relevance
\citep{di2025rekv, kim2025infinipotv, yang2025streammem, tang2025adaptivekeyframesamplinglong}.
For instance, InfiniPot-V~\citep{kim2025infinipotv} scores cached KV entries using key similarity to estimate temporal redundancy and value norms to estimate saliency, respectively.
StreamMem~\citep{yang2025streammem} instead distils historical context into a small set of memory tokens via
cross-attention, effectively summarising the evicted cache rather than
selecting from it.
While computationally attractive, both families share a fundamental limitation:
they do not explicitly optimize whether the retained memory is
globally \emph{representative} of the full accumulated history.
As a result, cues that are individually weak but collectively important (static visual
details, weak complementary signals, temporally distributed structure) may be discarded when they are not locally salient at the moment of compression.

We take a different and more principled approach: instead of scoring cached tokens independently, we formulate KV-cache compression as representative coreset selection. The key question becomes which small subset best covers the accumulated visual KV cache under a fixed memory budget. We instantiate this idea in the cached KV feature space and make three contributions:

\textbf{(1) A coreset view of streaming KV-cache compression.}
We introduce a query-agnostic, coverage-oriented formulation for visual KV-cache compression, shifting the objective from local token-wise pruning to selecting a compact memory that represents the accumulated video context or memory.

\textbf{(2) Joint KV representative coverage.}
We show, empirically,  that representative selection should account for both keys and values: keys determine how future queries retrieve cached entries, while values
determine the information read out by attention. Based on this observation, we
introduce a joint KV representation and a bicriteria coverage objective
that explicitly balances approximation in key space and value space, yielding a
more representative retained cache.
% than K-only or V-only selection.

\textbf{(3) Orthogonal diversity for non-redundant memory.}
A greedy optimization of the representativeness objective, however, may yield a redundant coreset.  
Further, multiple representative subsets may have equal or similar representative power. 
To avoid wasting token budget, we introduce an
orthogonal anti-redundancy term that favors candidates contributing new
directions beyond the span of the current selection. We connect this criterion
to log-determinant subset selection, providing a principled motivation for proposed 
% using directional diversity as a 
lightweight regularizer on top of the coreset
coverage objective. 

This results in an inference-time approach that can easily be incorporated into any transformer-based model. Consequently, we illustrate it within various versions of Qwen \citep{qwen25vl, qwen2vl} and LLaVA-Next \citep{llavanext} families of models on five video benchmarks (EgoSchema \citep{egoschema}, MLVU \citep{mlvu}, Video MME \citep{videomme},  OVO-Bench\citep{ovobench} and StreamingBench \citep{streamingbench}) under various memory / token compression budgets. In all cases we illustrate substantial improvement over baseline approaches ({\em e.g.}, InfiniPot-V \citep{kim2025infinipotv} and StreamMem \citep{yang2025streammem}), especially in low memory budget regimes, and sometimes even improve on performance of the full model counterparts that have access to uncompressed memory -- an effect that accounts for the difficulty of dealing with long and highly imbalanced visual context, even when it fits in memory (a.k.a. needle-in-a-haystack \citep{needle}). 
\vspace{-0.05in}
\section{Related Work}
\label{sec:related}

\vspace{-0.1in}
{\bf Video Understanding.} Video understanding has a long history in computer vision, with work in the area dating back 20+ years. 
With the advent of deep learning, video understanding has shifted from hand-crafted representations to Transformer-based video foundation models and vision-language models.
VLM-based video solutions~\cite{qwen25vl,wang2025internvideo25,cheng2024videollama2,li2024llavaonevision,lin2023videollava} are highly expressive, as they allow free-form interaction and querying of the videos. However, scaling VLMs to long videos remains a core challenge, as memory and compute scale poorly (linearly and quadratically) with the length of the video in these transformer-based architectures. 
This is particularly challenging in streaming settings, where the length of the video may not only be long but is unknown and effectively infinite. 

\paragraph{Efficient video understanding via visual token pruning.}
To enable efficient long-video understanding, recent works have been focusing on inference time technique to improve the performance of long video understanding for VLMs without having to retrain/finetune them. 
A common strategy which lies in this scope is to reduce the number of visual tokens before they are processed by the LLM. 
This line of work operates at the input-token level and typically exploits redundancy in the visual stream. Existing methods prune or merge tokens based on temporal cues (EVS~\cite{bagrov2025evs}), feature-similarity clustering (DivPrune~\cite{alvar2025divprune}, Token Merging~\cite{bolya2023tome}), or vision-encoder saliency (LLaVA-PruMerge~\cite{shang2024prumerge}).
Despite being effective at reducing number of visual-tokens, these method usally relies solely on the single-frame visual features and relies on simple heruistics \cite{bagrov2025evs} to reduce temporal redundancy.
Also these line of work does not have a natural mechanism to preserve a pool of memory and therefore hard to adapt to the long streaming video setting.
As a result, they are complementary to our setting, where the key challenge is how to update and compress the accumulated KV cache as the video arrives over time.

\paragraph{Streaming video understanding and KV-cache compression.}
Streaming video understanding focuses on the online setting with unbounded input length. Benchmarks such as StreamingBench \cite{streamingbench} and models like StreamingVLM \cite{streamingvlm} and LongVU \cite{longvu} address this regime. While some methods operate at the frame level (e.g., AKS \cite{tang2025adaptivekeyframesamplinglong}), recent work integrates KV-cache compression into streaming pipelines. InfiniPot-V \cite{kim2025infinipotv}, ReKV \cite{di2025rekv}, and StreamMem \cite{yang2025streammem} apply heuristic-based strategies using similarity, norms, or retrieval. Despite their effectiveness, these methods lack formal guarantees on the representativeness of the retained memory. In contrast, our approach formulates KV-cache compression as a coreset selection problem with geometric grounding, offering a principled alternative for maintaining a compact yet representative memory in streaming video understanding. 

%% ============================================================
\newcommand{\kk}{\mathbf{k}}
\newcommand{\bphi}{\boldsymbol{\phi}}

\section{Method}
\label{sec:method}

\begin{figure*}[t]
    \centering
    \includegraphics[width=0.95\textwidth]{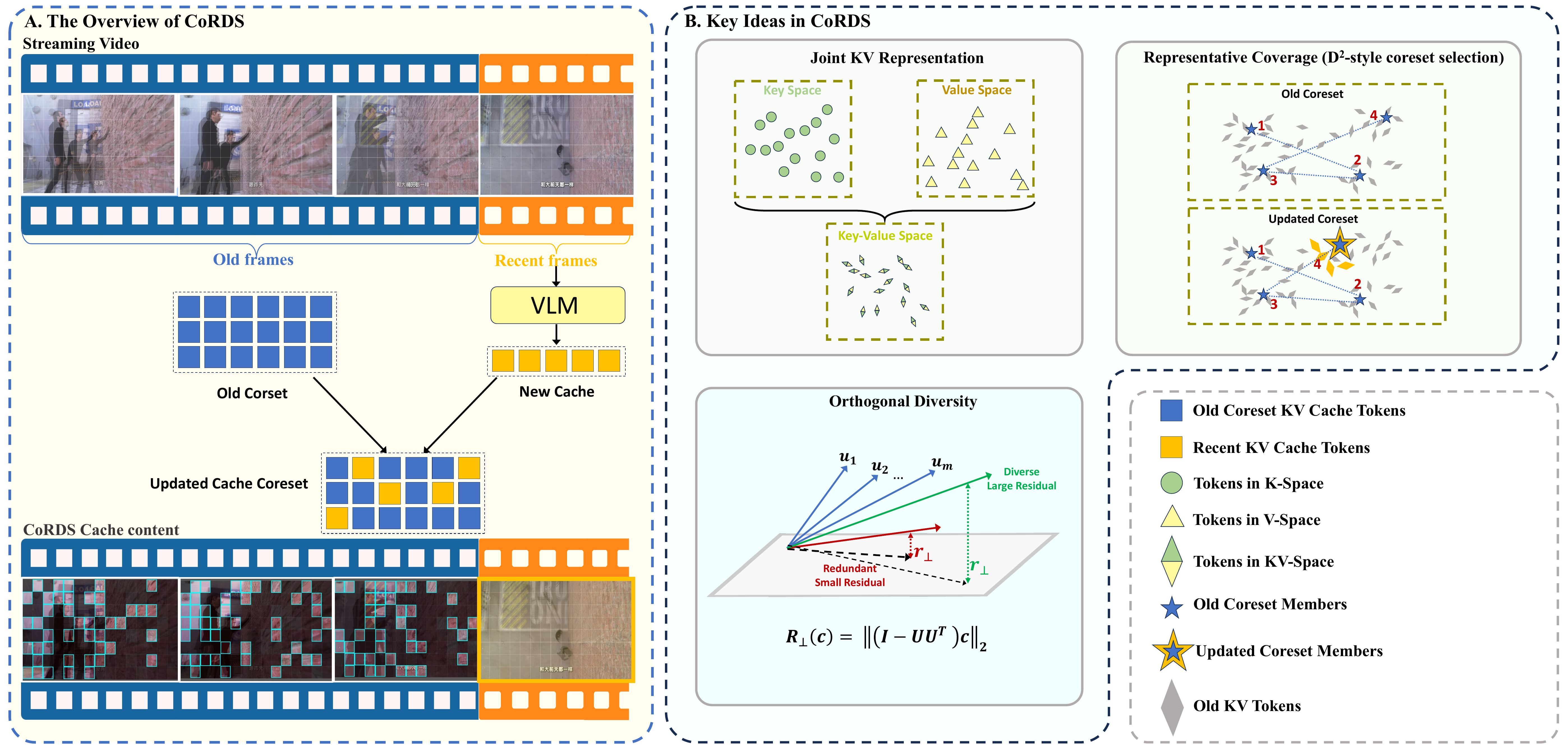}
    \caption{\textbf{CoRDS framework.} As streaming frames arrive, the accumulated KV cache is compressed by $D^2$-style coreset selection in joint K-V space (Eq.~\ref{eq:dalpha}) with an orthogonal anti-redundancy term (Eq.~\ref{eq:orth-score}); the persistent cache is then used to answer asynchronous question queries.}
    \label{fig:framework}
\end{figure*}

We formulate KV-cache compression as a representative coreset selection problem of the accumulated visual cache. In doing so, we are guided by a few core principles. First, for greatest versatility the approach should be {\em training-free}, meaning that it needs to rely on capacity of the trained model and its internal representations. Second, the selected coreset must be as {\em representative} as possible (for performance), {\em non-redundant} (for memory efficiency) and {\em query agnostic} (for practical use). Third, given hierarchy of semantic representations at different levels of the transformer, the optimal coresets may differ at these levels arguing for a potentially {\em local}, as opposed to monolithic, selection policy. 

With these design constraints in mind we first establish \emph{where selection should be made}, using the joint KV feature space; then define \emph{how to represent} tokens in that space, through a bicriteria coverage objective optimized by a deterministic \(D^2\)-style farthest-first rule; and finally address \emph{how to diversify} the selected subset, using an orthogonal anti-redundancy regularization motivated by a log-determinant approximation guarantee.

\paragraph{Setup, coreset objective, and tractable surrogate.}

A transformer decoder with $L$ layers maintains, at each layer $\ell$, key and
value projections for every processed token $i$:
\begin{equation}
  \kk_i^{(\ell)} = W_K^{(\ell)} h_i^{(\ell)},
  \qquad
  \vv_i^{(\ell)} = W_V^{(\ell)} h_i^{(\ell)},
  \label{eq:kv-proj}
\end{equation}
where $h_i^{(\ell)}\in\R^{d_{\mathrm{model}}}$ is the hidden state.
For simplicity, we omit the layer superscript when discussing the selection problem within a fixed layer.
At streaming time, the cache is naturally divided into an old cache $\Oset = \{(\kk_i,\vv_i)\}_{i=1}^{N}$ and a recent cache $\Rset$. The recent cache is always retained to preserve the latest visual state. The old cache, however, must be reduced to a budget-$b$ subset $\mathcal{S}\subset[N]$. The retained cache is therefore $\mathcal{S}\cup\Rset$, with $|\mathcal{S}|=b$.
The ideal subset would maximize downstream performance,
which depends on future queries and is intractable at selection time.

We therefore replace it with a \emph{query-agnostic geometric coverage objective}.
We view the compressed cache as a coreset: a small subset that approximates the full accumulated cache for future attention queries. Let $\bphi_i$ denote a feature representation of cached token $i$. Ideally, we want a weighted subset $(\mathcal{S},\mathbf{w})$ that uniformly approximates the full token set over a future attention queries set ($\mathcal{Q}$):
\begin{equation}
  \sup_{q\in\mathcal{Q}}
  \left|
    \sum_{i\in\mathcal{S}} w_i\, f(q,\bphi_i)
    -
    \sum_{i=1}^{N} f(q,\bphi_i)
  \right|
  \le
  \varepsilon
  \sum_{i=1}^{N} f(q,\bphi_i),
  \label{eq:coreset-def}
\end{equation}
where $f(q,\bphi_i)$ denotes the contribution of token $i$ to the
attention output, and the supremum ($\sup_{q\in\mathcal{Q}}\left| \right|$) is taken over all
$q\in\mathcal{Q}$, corresponding to the worst-case approximation error over
future queries. Here $w_i \in \mathbb{R}_{\geq 0}$ is the weight assigned to retained token $i \in \mathcal{S}$ (collected into the vector $\mathbf{w}=(w_i)_{i\in\mathcal{S}}$), reflecting how much of the original cache that token represents in the approximation. The approximation error of the compressed cache can be bounded by key-space and value-space quantization errors (Proposition~\ref{prop:attn-approx}, Appendix~\ref{app:coreset-theory}).

Since exactly solving Eq.~\eqref{eq:coreset-def} is NP-hard, we instead minimize the tractable \emph{quantization-error surrogate}
$\sum_{i\in\Oset} d_\alpha(i,\mathcal{S})$, with $d_\alpha$ defined in
Eq.~\eqref{eq:dalpha} below, which upper-bounds the coreset error in the
joint KV feature space.

When old tokens are organized into video frames, we work at frame level:
frame $t$ is represented by the centroid
$\bar{\kk}_t = \tfrac{1}{|\mathcal{I}_t|}\sum_{i\in\mathcal{I}_t}\kk_i$
(and $\bar{\vv}_t$ analogously), and selected frame indices are then expanded
to token indices.

\paragraph{Joint KV representation.}
The first design choice is the space in which coverage should be measured.
For each cached token, three natural feature choices are:
\begin{equation}
  \bphi_i^{(K)} = \kk_i,
  \qquad
  \bphi_i^{(V)} = \vv_i,
  \qquad
  \bphi_i^{(KV)} = \begin{bmatrix}\kk_i \\ \vv_i\end{bmatrix}
  \in \R^{d_k+d_v}.
  \label{eq:features}
\end{equation}
Selecting only in key space ($\bphi^{(K)}$) preserves retrieval structure but ignores the content read out by attention. Selecting only in value space ($\bphi^{(V)}$) preserves output content but ignores which cached tokens future queries are likely to retrieve. We therefore select in the joint KV representation ($\bphi^{(KV)}$). Empirically, joint KV selection consistently outperforms K-only or V-only selection (Section \ref{sec:exp-ablations-selector}); theoretically, the attention-error bound depends on both key and value approximation errors (Proposition~\ref{prop:attn-approx}, Appendix~\ref{app:coreset-theory}). 
All subsequent construction operates on $\bphi_i = \bphi_i^{(KV)}$, and the
same selected index set is used to retain the corresponding keys and values.

\paragraph{Bicriteria KV coreset.}

Directly concatenating keys ($\kk$) and values ($\vv$) implicitly fixes their relative importance through feature scale. To make this balance explicit, we define a \emph{bicriteria distance} from a candidate token $i$ to a selected set $\mathcal{S}$:
\begin{equation}
  d_\alpha(i,\mathcal{S})
  = \min_{j\in\mathcal{S}} \left[\alpha \norm{\kk_i-\kk_j}^2
    + (1-\alpha)\norm{\vv_i-\vv_j}^2 \right],
  \qquad \alpha\in[0,1].
  \label{eq:dalpha}
\end{equation}
The resulting coreset objective is:
\begin{equation}
  \mathcal{S}^\star_b
  = \argmin_{|\mathcal{S}|=b}\ \sum_{i\in\Oset} d_\alpha(i,\mathcal{S}).
  \label{eq:bicriteria-obj}
\end{equation}
The limit $\alpha\!\to\!1$ recovers K-only selection, while $\alpha\!\to\!0$ recovers V-only selection.
In practice, we find that value-favoring settings, $\alpha<0.5$, perform best, which aligns with the attention-output bound: values directly
determine output magnitude while keys only modulate which tokens are
retrieved.

\paragraph{Greedy $D^2$-style coreset selection.}

We minimize Eq.~\eqref{eq:bicriteria-obj} greedily using $D^2$-style
farthest-first selection, the deterministic counterpart of $D^2$-weighted
sampling~\citep{arthur2007kmeans}, which seeds cluster centers in proportion
to their squared distance from all previously chosen centers.

At each step $t$, the next token is
\begin{equation}
  i_{t+1}
  = \argmax_{i\in\Oset\setminus\St}\ d_\alpha\!\left(i,\, \St\right),
  \label{eq:d2-step}
\end{equation}
where $\St = \{i_1,\ldots,i_t\}$ is the current selected set.
Compared with $k$-means initialization or greedy variants that do not account for distance weighting, this squared-distance criterion is the one
that gives $D^2$ sampling its $\mathcal{O}(\log k)$-approximation guarantee
for the $k$-means objective. The full greedy pass runs in $\mathcal{O}(b\cdot N)$ per layer-head, making it
practical for online streaming inference.

\begin{wrapfigure}{r}{0.385\textwidth}
  \centering
  \vspace{-25pt}
  \begin{tikzpicture}[
    scale=0.78,
    vec/.style={-{Stealth[length=6pt,width=4pt]}, line width=1.6pt},
    dasharrow/.style={dashed, -{Stealth[length=4pt,width=3pt]}, line width=1.0pt},
    spanfillstyle/.style={fill=spanfill, opacity=0.22},
    spanfillstyle2/.style={fill=spanfill2, opacity=0.20},
    lbl/.style={font=\small},
    titlestyle/.style={font=\bfseries\small}
  ]

  % Panel (a)
  \begin{scope}[yshift=0cm]
    \node[titlestyle, anchor=north] at (2.3,2.3) {(a)~Redundant Selection};

    \fill[spanfillstyle] (0,0) -- (4.4,1.35) -- (4.4,0.45) -- cycle;
    \draw[greylight, very thin] (0,0) -- (4.4,1.35);
    \draw[greylight, very thin] (0,0) -- (4.4,0.45);
    \node[lbl, color=gray!70!black] at (3.8,0.68) {$\spanop(\mathcal{S}_t)$};

    \draw[vec, keyblue] (0,0) -- (3.8,0.9)
      node[right, lbl] {$\kv_{j_1}$};

    \coordinate (knewa) at (2.4, 0.80);
    \coordinate (kproja) at (2.451, 0.580);

    \draw[vec, valred, dashed] (0,0) -- (knewa)
      node[above, lbl, xshift=-6pt] {$\kv_{\mathrm{new}}$};

    \draw[projgrey, very thin] (knewa) -- (kproja);

    \coordinate (ra1a) at ($(kproja)!5pt!(knewa)$);
    \coordinate (ra2a) at ($(kproja)!5pt!(3.8, 0.9)$);
    \coordinate (raca) at ($(ra1a) + (ra2a) - (kproja)$);
    \draw[projgrey, thin] (ra1a) -- (raca) -- (ra2a);

    \node[lbl, valred, align=center] at (1.9,-0.65)
      {\textit{small residual} $\|\rv^K\|^2 \approx 0$};
    \node[lbl, align=center] at (2.2,-1.2) {\textit{wastes budget}};

    \fill[black] (0,0) circle (2.5pt);
    \node[lbl, below left] at (0,0) {$\mathbf{0}$};
  \end{scope}

  % separator
  \draw[greylight, very thin, dashed] (-0.3,-1.85) -- (4.9,-1.85);

  % Panel (b)
  \begin{scope}[yshift=-6.2cm]
    \node[titlestyle, anchor=north] at (2.3,4.3) {(b)~Orthogonal Selection};

    \fill[spanfillstyle] (0,0) -- (4.4,1.35) -- (4.4,0.45) -- cycle;
    \draw[greylight, very thin] (0,0) -- (4.4,1.35);
    \draw[greylight, very thin] (0,0) -- (4.4,0.45);
    \node[lbl, color=gray!70!black] at (3.8,0.68) {$\spanop(\mathcal{S}_t)$};

    \draw[vec, keyblue] (0,0) -- (3.8,0.9)
      node[right, lbl] {$\kv_{j_1}$};

    \coordinate (knewb) at (0.7, 3.2);
    \coordinate (kprojb) at (1.380, 0.327);

    \draw[dasharrow, projgrey] (0,0) -- (kprojb)
      node[below, lbl, color=projgrey, yshift=-3pt]
        {$\proj_{\mathcal{S}_t}^{K}\!\kv_{\mathrm{new}}$};

    \draw[vec, residgreen] (kprojb) -- (knewb)
      node[right, lbl, color=residgreen, align=left]
        {$\rv^K = \kv_{\mathrm{new}} - \proj^K\kv_{\mathrm{new}}$};

    \draw[vec, valred] (0,0) -- (knewb)
      node[above left, lbl] {$\kv_{\mathrm{new}}$};

    \coordinate (ra1b) at ($(kprojb)!5pt!(knewb)$);
    \coordinate (ra2b) at ($(kprojb)!5pt!(3.8, 0.9)$);
    \coordinate (racb) at ($(ra1b) + (ra2b) - (kprojb)$);
    \draw[projgrey, thin] (ra1b) -- (racb) -- (ra2b);

    \fill[spanfillstyle2] (0,0) -- (3.8,0.9) -- (knewb) -- cycle;
    \node[lbl, color=orange!75!black, rotate=52] at (1.55, 1.85)
      {\textit{new span}};

    \node[lbl, residgreen, align=center] at (1.9,-0.65)
      {\textit{large residual} $\|\rv^K\|^2 \gg 0$};
    \node[lbl, align=center] at (1.9,-1.2) {\textit{expands coverage}};

    \fill[black] (0,0) circle (2.5pt);
    \node[lbl, below left] at (0,0) {$\mathbf{0}$};
  \end{scope}

  \end{tikzpicture}
  \caption{%
  \textbf{Orthogonality intuition.} \textit{(a)} A near-span candidate has a small residual $\rv^K$ and adds little new coverage, whereas \textit{(b)} a large-residual candidate expands the span into a new region.
  }
  \label{fig:orth-intuition}
  % \vspace{-8pt}
\end{wrapfigure}

\paragraph{Orthogonal anti-redundancy.}
Coverage-based selection has a subtle failure mode: two candidates with high
$d_\alpha$ scores may lie in nearly the same direction in feature space, so
selecting both wastes budget while leaving other directions uncovered
(see Figure~\ref{fig:orth-intuition}).
We address this through an \emph{orthogonal subspace criterion} grounded in log-determinant coverage.

Given the selected key matrix
$\mathbf{U}_t = [\kk_j]_{j\in\St}\in\R^{d_k\times t}$,
define the orthogonal projector
\begin{equation}
  P_t^K
  = \mathbf{U}_t\bigl(\mathbf{U}_t^\top\mathbf{U}_t\bigr)^{\!\dagger}\mathbf{U}_t^\top
  \label{eq:projector}
\end{equation}
and the \emph{key-space residual} of candidate $i$:
\begin{equation}
  \rv_i = \bigl(I - P_t^K\bigr)\kk_i,
  \qquad
  \rv_i^V = \bigl(I - P_t^V\bigr)\vv_i.
  \label{eq:residuals}
\end{equation}
The squared norm $\norm{\rv_i}^2$ measures the component of $\kk_i$ that lies
\emph{outside} the span already covered by $\St$:
a large residual means token $i$ contributes a genuinely new direction.
Similarly, $P_t^V$ and $\rv_i^V$ are defined using the selected value matrix.
The \emph{orthogonal novelty score} is
\begin{equation}
  \mathrm{Orth}(i\mid\St)
  = \eta\,\norm{\rv_i}^2 + (1-\eta)\,\norm{\rv_i^V}^2,
  \qquad \eta\in[0,1].
  \label{eq:orth-score}
\end{equation}

This orthogonal term is motivated by classical log-determinant subset
selection. In particular, for the key matrix
$\mathbf{U}_{\mathcal{S}}^K=[\kk_j]_{j\in\mathcal{S}}$, define
\[
  \mathcal{F}(\mathcal{S})
  =
  \log\det\!\left(
    (\mathbf{U}_{\mathcal{S}}^K)^\top
    \mathbf{U}_{\mathcal{S}}^K
    +
    \epsilon I
  \right).
\]
Greedily selecting the candidate with the largest orthogonal residual maximizes
the marginal gain of this log-det objective and achieves the $(1-e^{-1})$ approximation guarantee for monotone submodular maximization (Theorem~\ref{thm:logdet} in Appendix~\ref{app:coreset-theory}). We use this result as a motivation for
the diversity term. The final algorithm, however, uses orthogonality only as a
soft bonus on top of the bicriteria coreset objective; we do not claim the same
approximation guarantee for the combined score.
\paragraph{Final selection score.}

Pure orthogonal selection is not sufficient: a geometrically novel token may be irrelevant to the model output, and exact span projection is expensive for large caches. We therefore keep bicriteria coverage as the backbone and use orthogonality as a lightweight anti-redundancy regularizer.
At step $t$, each candidate is scored, and the next representative is selected as follows:

\begin{equation}
  \mathrm{score}(i\mid\mathcal{S}_t)
  =
  \widetilde{d}_\alpha(i,\mathcal{S}_t)
  +
  \lambda
  \widetilde{\mathrm{Orth}}(i\mid\mathcal{S}_t), \;\;\;\;
  i_{t+1}
  =
  \argmax_{i\in\Oset\setminus\mathcal{S}_t}
  \mathrm{score}(i\mid\mathcal{S}_t),
  \label{eq:final-score}
\end{equation}
where $\lambda > 0$ is a small fixed weight and $\widetilde{(\cdot)}$ denotes min-max normalization over the current candidate pool. The min-max normalization is important because the bicriteria distance and orthogonal novelty can have different numerical scales. The orthogonal bonus is implemented with a max-cosine redundancy surrogate
(cheaper than exact span projection), making the per-step overhead comparable
to a single matrix-vector product. Eq.~\eqref{eq:final-score} provides a compact selection criterion that balances coverage of the joint $(\kk,\vv)$ geometry with an orthogonal diversity term that discourages redundant directions, thereby retaining tokens that best preserve the geometry of the accumulated cache.

\paragraph{Layer-selective compression and cascade reuse.}
The selection rule above can be applied independently to each decoder layer.
However, not all layers benefit equally from compression. Early layers tend to retain more redundant frame-level visual structure, while later layers contain more task-conditioned features that are more sensitive to pruning. 
We therefore apply the coreset selector only to selected lower layers and leave the remaining layers uncompressed unless otherwise specified. To reduce the cost of running selection on multiple layers, we further use a cross-layer cascade. A small set of anchor layers performs the full coreset selection, while nearby follower layers reuse the selected token indices from their anchor. Importantly, follower layers reuse only the indices, not the K/V vectors themselves. If layer $\ell'$ follows anchor layer $\ell$ with selected indices $\mathcal{S}^{(\ell)}$, then layer $\ell'$ retains its own cached values at those positions, $K^{(\ell')} \leftarrow K^{(\ell')}[\mathcal{S}^{(\ell)}]$ and $V^{(\ell')} \leftarrow V^{(\ell')}[\mathcal{S}^{(\ell)}]$. This preserves layer-specific representations while avoiding redundant selection passes.

\begin{wrapfigure}{r}{0.52\textwidth}
\vspace{-1.2em}
\begin{minipage}{\linewidth}
\begin{algorithm}[H]
\small
\caption{CoRDS: Coreset-based KV-Cache Compression}
\label{alg:coverstream}
\begin{algorithmic}[1]
\Require
  Old cache $\Oset {=} \{(\kk_i,\vv_i)\}_{i=1}^N$,
  recent cache $\Rset$ (kept),
  budget $b$,
  hyperparams $\alpha,\eta,\lambda,\varepsilon_0$
\Ensure Compressed cache $(\mathcal{S}\cup\Rset)$, $|\mathcal{S}|{=}b$
\State $\mathcal{S}_0 \!\leftarrow\! \{\argmax_i\,\norm{\kk_i{+}\vv_i}\}$
       \Comment{\scriptsize seed}
\For{$t = 0, 1, \ldots, b-1$}
  \For{each $i \in \Oset\setminus\mathcal{S}_t$}
    \State $d_\alpha(i,\mathcal{S}_t)$ via Eq.~\eqref{eq:dalpha}
    \State $\mathrm{Orth}(i\!\mid\!\mathcal{S}_t)$ via Eq.~\eqref{eq:orth-score}
  \EndFor
  \State Min-max normalize both terms
  \State $i_{t+1} \!\leftarrow\! \argmax_{i\in\Oset\setminus\mathcal{S}_t}
         \mathrm{score}(i\!\mid\!\mathcal{S}_t)$ \,(Eq.~\eqref{eq:final-score})
  \State $\mathcal{S}_{t+1} \!\leftarrow\! \mathcal{S}_t \cup \{i_{t+1}\}$
\EndFor
\State \Return $\mathcal{S}_b \cup \Rset$
\end{algorithmic}
\end{algorithm}
\end{minipage}
\vspace{-1.2em}
\end{wrapfigure}

%% ============================================================
\section{Experiments}
\label{sec:experiments}

\subsection{Experimental Setup: Models, Benchmarks, and Budgets}
\label{sec:exp-setup}
% \paragraph{Models, benchmarks, and budgets.}
We evaluate our training-free, query-agnostic compressor on four open VLMs:
Qwen2-VL-7B \citep{qwen2vl}, Qwen2.5-VL-3B/7B \citep{qwen25vl}, and
LLaVA-NeXT-Video-7B \citep{llavanext}. Following the OVU/SVU split of recent
streaming-oriented work \citep{kim2025infinipotv,yang2025streammem}, we evaluate on three
\emph{offline} long-video benchmarks (EgoSchema \citep{egoschema},
MLVU \citep{mlvu}, and VideoMME \citep{videomme}, reported
on the Short/Medium/Long splits) and two \emph{streaming} benchmarks, OVO-Bench \citep{ovobench} and StreamingBench \citep{streamingbench}, where queries arrive at arbitrary timestamps. Visual tokens are processed causally and the compression rule never accesses the question.
We use each model's standard sampling: Qwen2(.5)-VL with 768 frames
($\sim$50K tokens) and LLaVA-NeXT-Video with 128 frames ($\sim$25K tokens).
Accuracy is reported at two cache budgets, $|M|{=}3\text{K}$ and
$|M|{=}6\text{K}$ visual KV tokens.
For methods whose official results are available under matching backbone and budget, we quote numbers from the original papers; otherwise, we rerun them in our evaluation pipeline.

\subsection{Offline Video Understanding}
\label{sec:exp-offline}
For the offline long-video benchmarks, we use EgoSchema~\citep{egoschema},
MLVU~\citep{mlvu}, and VideoMME~\citep{videomme} as controlled testbeds for
streaming-style compression. Although these benchmarks are commonly evaluated
in an offline setting, we process visual tokens causally as the video unfolds,
rather than assuming that the entire video is available upfront. The cache is
therefore compressed online before the question is accessed, preserving the
query-agnostic setting used throughout our evaluation.
We compare against three families of baselines on the offline benchmarks:
the uncompressed Full KV cache as a conceptual upper bound\footnote{Note, that this is not a true upper bound in terms of performance as long context may actually negatively impact accuracy, resulting in compressed cache model performing better in practice.}; the query-agnostic
vision-token compressors STC \citep{longvu} and TTC \citep{dycoke}; and
the KV-cache compressors InfiniPot-V \citep{kim2025infinipotv} and
StreamMem \citep{yang2025streammem}.  
We include reference
points from GPT-4V/4o and proprietary long-video systems
(LLaVA-OV \citep{llavaov}, LongVU \citep{longvu}) where reported. Table~\ref{tab:ovu_main} reports accuracy on EgoSchema, MLVU, and VideoMME across four backbones at two cache budgets. At $|M|{=}6\text{K}$ ($\sim$$8\times$ compression), our method matches or exceeds the uncompressed Full KV baseline on nearly all Qwen2(.5)-VL backbones, while at $|M|{=}3\text{K}$ ($\sim$$17\times$ compression) it still outperforms InfiniPot-V at twice the budget. 
Overall, our method improves over prior compression baselines under matched budgets across the evaluated backbones, and in many cases matches or exceeds the Full KV reference. This suggests that the performance gains come from retaining a more informative and less redundant subset of visual KV tokens, rather than merely increasing the cache size.

\begin{table*}[t]
\centering
\small
\setlength{\tabcolsep}{6pt}
\renewcommand{\arraystretch}{1.08}
\resizebox{\linewidth}{!}{
\begin{tabular}{llllllll}
\toprule
\multicolumn{2}{l}{\textbf{Method}} & \textbf{Size} & \textbf{\# Frames} & \textbf{Budget} & \textbf{EgoSchema} & \textbf{MLVU} & \textbf{VideoMME} \\
& & & \textbf{(Max)} & $\mathbf{|M|}$ & & & \\
\midrule
\multicolumn{2}{l}{GPT4-V} & -- & 1fps & -- & 55.6 & -- & 60.7 \\
\multicolumn{2}{l}{GPT4-o} & -- & 1fps & -- & 72.2 & 66.2 & 77.2 \\
\midrule
\multicolumn{2}{l}{LLaVA-OV \citep{llavaov}} & 7B & 32 & 8K & 60.1 & 64.7 & 58.2 \\
\multicolumn{2}{l}{LongVU \citep{longvu}} & 7B & 1fps & 8K & 67.6 & 65.4 & 60.6 \\
\multicolumn{2}{l}{LongVU \citep{longvu}} & 3B & 1fps & 8K & 59.1 & 55.9 & 51.5 \\
\midrule

\multirow{10}{*}{\rotatebox{90}{Qwen2-VL}} 
& Original & 7B & 768 & 50K & 65.2 & 65.8 & 63.9 \\
& + TTC \citep{dycoke} & 7B & 768 & 6K & -- & 58.4 & 59.8 \\
& + STC \citep{longvu} & 7B & 768 & 6K & -- & 57.9 & 60.1 \\
& + InfiniPot-V \citep{kim2025infinipotv} & 7B & 768 & 6K & 65.6 & 65.8 & 62.8 \\
& + StreamMem \citep{yang2025streammem} & 7B & 4.0/0.5 fps & 6K & 67.2 & 65.9 & 62.1 \\
& \cellcolor{yellow!15}+ \textbf{CoRDS (ours)} & \cellcolor{yellow!15}7B & \cellcolor{yellow!15}768 & \cellcolor{yellow!15}6K & \cellcolor{yellow!15}\textbf{68.4}\textsubscript{ {\textcolor{mygreen}{+1.2}}} & \cellcolor{yellow!15}\textbf{70.1}\textsubscript{ {\textcolor{mygreen}{+4.2}}} & \cellcolor{yellow!15}\textbf{64.8}\textsubscript{ {\textcolor{mygreen}{+2.0}}} \\

& + TTC \citep{dycoke} & 7B & 768 & 3K & -- & 54.8 & 55.3 \\
& + STC \citep{longvu} & 7B & 768 & 3K & -- & 55.0 & 56.1 \\
& + InfiniPot-V \citep{kim2025infinipotv} & 7B & 768 & 3K & -- & 63.1 & 61.2 \\
& \cellcolor{yellow!15}+ \textbf{CoRDS (ours)} & \cellcolor{yellow!15}7B & \cellcolor{yellow!15}768 & \cellcolor{yellow!15}3K & \cellcolor{yellow!15}\textbf{68.1} & \cellcolor{yellow!15}\textbf{67.4}\textsubscript{ {\textcolor{mygreen}{+4.3}}} & \cellcolor{yellow!15}\textbf{64.3}\textsubscript{ {\textcolor{mygreen}{+3.1}}} \\

\midrule

\multirow{4}{*}{\rotatebox{90}{\shortstack{\footnotesize LLaVA-Next\\-Video}}} 
& Original & 7B & 128 & 25K & 67.6 & 68.7 & 62.8 \\
& + InfiniPot-V \citep{kim2025infinipotv} & 7B & 128 & 6K & 65.8 & 65.2 & 61.1 \\
& \cellcolor{yellow!15}+ \textbf{CoRDS (ours)} & \cellcolor{yellow!15}7B & \cellcolor{yellow!15}128 & \cellcolor{yellow!15}6K & \cellcolor{yellow!15}\textbf{66.2}\textsubscript{ {\textcolor{mygreen}{+0.4}}} & \cellcolor{yellow!15}\textbf{68.1}\textsubscript{ {\textcolor{mygreen}{+2.9}}}  & \cellcolor{yellow!15}\textbf{62.5}\textsubscript{ {\textcolor{mygreen}{+1.4}}} \\
& \cellcolor{yellow!15}+ \textbf{CoRDS (ours)} & \cellcolor{yellow!15}7B & \cellcolor{yellow!15}128 & \cellcolor{yellow!15}3K & \cellcolor{yellow!15}\textbf{66.0} & \cellcolor{yellow!15}\textbf{67.8} & \cellcolor{yellow!15}\textbf{62.0} \\
\midrule

\multirow{5}{*}{\rotatebox{90}{Qwen2.5-VL }} 
& Original & 3B & 768 & 50K & 64.4 & 63.3 & 60.3 \\
& + InfiniPot-V \citep{kim2025infinipotv} & 3B & 768 & 6K & 61.8 & 62.1 & 59.3 \\
& + StreamMem \citep{yang2025streammem} & 3B & 4.0/0.5 fps & 6K & 62.2 & 62.3 & 59.5 \\
& \cellcolor{yellow!15}+ \textbf{CoRDS (ours)} & \cellcolor{yellow!15}3B & \cellcolor{yellow!15}768 & \cellcolor{yellow!15}6K & \cellcolor{yellow!15}\textbf{64.1}\textsubscript{ {\textcolor{mygreen}{+1.9}}} & \cellcolor{yellow!15}\textbf{69.6}\textsubscript{ {\textcolor{mygreen}{+7.3}}} & \cellcolor{yellow!15}\textbf{61.1}\textsubscript{ {\textcolor{mygreen}{+1.6}}} \\
& \cellcolor{yellow!15}+ \textbf{CoRDS (ours)} & \cellcolor{yellow!15}3B & \cellcolor{yellow!15}768 & \cellcolor{yellow!15}3K & \cellcolor{yellow!15}\textbf{63.4} & \cellcolor{yellow!15}\textbf{68.3} & \cellcolor{yellow!15}\textbf{61.7} \\
\midrule

\multirow{3}{*}{\rotatebox{90}{\footnotesize\shortstack{Qwen2.5\\-VL}}} &Original  & 7B & 768 & 50K & 62.8 & 68.5 & 64.9 \\
& \cellcolor{yellow!15}+ \textbf{CoRDS (ours)} & \cellcolor{yellow!15}7B & \cellcolor{yellow!15}768 & \cellcolor{yellow!15}6K & \cellcolor{yellow!15}\textbf{66.4} & \cellcolor{yellow!15}\textbf{71.5} & \cellcolor{yellow!15}\textbf{65.7} \\
& \cellcolor{yellow!15}+ \textbf{CoRDS (ours)} & \cellcolor{yellow!15}7B & \cellcolor{yellow!15}768 & \cellcolor{yellow!15}3K & \cellcolor{yellow!15}\textbf{64.8} & \cellcolor{yellow!15}\textbf{70.0} & \cellcolor{yellow!15}\textbf{65.3} \\
\bottomrule
\end{tabular}}
\caption{%
  \textbf{Offline video understanding benchmarks.} \textcolor{mygreen}{Green} numbers indicate the absolute improvement over the closest compressed competitor under the same backbone and cache budget. 
}
\label{tab:ovu_main}

\end{table*}

\subsection{Streaming Video Understanding}
\label{sec:exp-streaming}
Streaming evaluation is the strictest test of a query-agnostic compressor: the question arrives at an arbitrary timestamp, so query-dependent rules cannot affect tokens compressed before the question is known. Table~\ref{tab:integrated_ovo_streaming_qwen2vl}
reports OVO-Bench and StreamingBench with Qwen2-VL-7B at
$|M|{=}6\text{K}$. The uncompressed Qwen2-VL-7B aggregates are taken from the OVO-Bench \citep{ovobench}, while InfiniPot-V
\citep{kim2025infinipotv} is re-run under our setting at matched budget. On OVO-Bench, our method exceeds both baselines on the overall score, with gains distributed across Backward, Real-time, and Forward sub-tracks rather than concentrated in one regime. On StreamingBench, our compressor exceeds InfiniPot-V on the majority of sub-tasks, with the largest gains on categories that require evidence dispersed across long stretches such as Prospective Reasoning, Counting, and Causal Reasoning.

\definecolor{headergray}{RGB}{240, 240, 240}
\definecolor{avgcol}{RGB}{252, 247, 220}
\definecolor{bestcell}{RGB}{218, 237, 218}

\newcommand{\rot}[1]{\rotatebox{60}{\scriptsize #1}}

\begin{table*}[t]
\centering
\scriptsize
\setlength{\tabcolsep}{3pt}
\renewcommand{\arraystretch}{1.1}
\caption{Integrated comparison on OVO-Bench and StreamingBench using \texttt{Qwen2-VL-7B} under a 6K compression budget. Qwen2-VL-7B aggregates are taken from the OVO-Bench\citep{ovobench}.}
\label{tab:integrated_ovo_streaming_qwen2vl}
\vspace{2pt}

% ===== OVO-Bench (transposed) =====
\begin{tabular}{@{}l ccc >{\columncolor{avgcol}}c cccccc >{\columncolor{avgcol}}c ccc >{\columncolor{avgcol}}c >{\columncolor{avgcol}}c@{}}
\toprule
& \multicolumn{4}{c}{\textbf{Backward}} & \multicolumn{7}{c}{\textbf{Real-time}} & \multicolumn{4}{c}{\textbf{Forward}} & \\
\cmidrule(lr){2-5}\cmidrule(lr){6-12}\cmidrule(lr){13-16}
\textbf{Method}
& \rot{ASI} & \rot{EPM} & \rot{HLD} & \textbf{Avg}
& \rot{ACR} & \rot{ATR} & \rot{FPD} & \rot{OCR} & \rot{OJR} & \rot{STU} & \textbf{Avg}
& \rot{REC} & \rot{CRR} & \rot{SSR} & \textbf{Avg}
& \textbf{Overall} \\
\midrule
Qwen2-VL-7B & -- & -- & -- & 46.46 & -- & -- & -- & -- & -- & -- & 55.98 & -- & -- & -- & 48.74 & 50.39 \\
InfiniPot-V  & 58.11 & 51.85 & 39.78 & 49.91 & 46.79 & 64.66 & 72.28 & 63.76 & 57.61 & 44.38 & 60.94 & 32.38 & 58.75 & 65.98 & 49.18 & 53.34 \\
\textbf{CoRDS (ours)} & 58.69 & 53.06 & 39.64 & \cellcolor{bestcell}\textbf{50.46}
              & 47.26 & 64.43 & 73.00 & 64.40 & 55.98 & 42.55 & \cellcolor{bestcell}\textbf{61.55}
              & 32.85 & 55.55 & 65.67 & \cellcolor{bestcell}\textbf{49.26}
              & \cellcolor{bestcell}\textbf{53.76} \\
\bottomrule
\end{tabular}

\vspace{3pt}
\begin{tabular}{@{}l cccccccccc >{\columncolor{avgcol}}c@{}}
\toprule
\multicolumn{12}{@{}l}{\textit{StreamingBench --- Real-time Visual Understanding}}\\
\textbf{Method}
& \textbf{CS} & \textbf{OR} & \textbf{AR} & \textbf{PR} & \textbf{ActR}
& \textbf{SU} & \textbf{EU} & \textbf{Cnt} & \textbf{TR} & \textbf{CaR}
& \textbf{Avg} \\
\midrule
Qwen2-VL-7B & -- & -- & -- & -- & -- & -- & -- & -- & -- & -- & -- \\
InfiniPot-V  & 84.36 & 77.78 & 79.91 & 74.44 & 68.40 & 65.54 & 77.88 & 38.82 & 74.39 & 77.68 & 72.75 \\
\textbf{CoRDS (ours)} & 85.97 & 81.02 & 78.79 & 81.92 & 67.82 & 63.96 & 77.82 & 40.93 & 72.88 & 80.00 & \cellcolor{bestcell}\textbf{73.43} \\
\bottomrule
\end{tabular}

\vspace{2pt}
{\scriptsize\textit{StreamingBench codes:} CS = Clips Summarize, OR = Object Recog., AR = Attribute Recog., PR = Prospective Reas., ActR = Action Recog., SU = Spatial Underst., EU = Event Underst., Cnt = Counting, TR = Text-Rich Underst., CaR = Causal Reas.}
\end{table*}

\subsection{Memory Footprint and Throughput}
\label{sec:exp-memory}
Table~\ref{tab:efficiency_long} reports peak VRAM, prefill FPS, and decode
throughput on Qwen2-VL-7B at video lengths
$L \in \{300, 500, 800, 1000\}$ seconds. Full KV runs out of memory
beyond $L = 500$\,s, while both InfiniPot-V and our method maintain a
flat memory footprint across all lengths. Our base configuration
matches InfiniPot-V on memory and decode throughput. Adding the
follower strategy, in which a small set of anchor layers run the full
selection rule and the rest reuse their decisions
(Appendix, Table~\ref{tab:follower_ablation_full}), preserves the memory footprint and
overtakes InfiniPot-V on prefill FPS at every length and on decode
throughput from $L \geq 500$\,s.

\begin{table*}[t]
\centering
\caption{Efficiency across varying video durations on Qwen2-VL-7B. V: peak VRAM (GB), F: prefill FPS, T: decode throughput (tok/s). Subscripts denote std over 5 runs; V is deterministic. Full KV OOMs for $L\geq600$\,s.}
\label{tab:efficiency_long}
\footnotesize
\setlength{\tabcolsep}{3pt}
\newcommand{\sd}[1]{_{\scriptscriptstyle\pm#1}}
\resizebox{\linewidth}{!}{
\begin{tabular}{l lll lll lll lll}
\toprule
& \multicolumn{3}{c}{$L=300$\,s} & \multicolumn{3}{c}{$L=500$\,s}
& \multicolumn{3}{c}{$L=800$\,s} & \multicolumn{3}{c}{$L=1000$\,s} \\
\cmidrule(lr){2-4}\cmidrule(lr){5-7}\cmidrule(lr){8-10}\cmidrule(lr){11-13}
Method
 & V$\downarrow$ & F$\uparrow$ & T$\uparrow$
 & V$\downarrow$ & F$\uparrow$ & T$\uparrow$
 & V$\downarrow$ & F$\uparrow$ & T$\uparrow$
 & V$\downarrow$ & F$\uparrow$ & T$\uparrow$ \\
\midrule
Full KV
 & 24.4 & $32.4\sd{0.2}$ & $13.8\sd{0.6}$
 & 35.9 & $27.5\sd{0.4}$ & $\mathbf{30.5}\sd{0.4}$
 & \multicolumn{3}{c}{OOM}
 & \multicolumn{3}{c}{OOM} \\
InfiniPot-V~\cite{kim2025infinipotv}
 & \textbf{10.1} & $42.8\sd{0.6}$ & $\mathbf{23.2}\sd{0.4}$
 & \textbf{10.7} & $43.9\sd{0.5}$ & $22.0\sd{0.4}$
 & \textbf{11.6} & $43.8\sd{1.7}$ & $20.5\sd{3.0}$
 & \textbf{11.8} & $44.4\sd{1.8}$ & $22.2\sd{2.2}$ \\
\textbf{CoRDS (w/o follower)}
 & 10.2 & $36.7\sd{0.2}$ & $\mathbf{23.2}\sd{0.4}$
 & 11.6 & $33.2\sd{0.2}$ & $23.5\sd{0.2}$
 & 13.8 & $28.4\sd{0.2}$ & $\mathbf{23.4}\sd{0.3}$
 & 14.3 & $27.1\sd{0.2}$ & $22.8\sd{0.2}$ \\
\textbf{CoRDS} 
 & \textbf{10.1} & $\mathbf{43.2}\sd{0.4}$ & $22.4\sd{0.2}$
 & \textbf{10.7} & $\mathbf{45.0}\sd{0.3}$ & $22.5\sd{0.2}$
 & \textbf{11.6} & $\mathbf{45.9}\sd{0.5}$ & $23.2\sd{0.3}$
 & \textbf{11.8} & $\mathbf{45.9}\sd{0.8}$ & $\mathbf{23.8}\sd{0.2}$ \\
\bottomrule
\end{tabular}
}
\end{table*}

\begin{figure}[t]
    \centering
    \includegraphics[width=\textwidth]{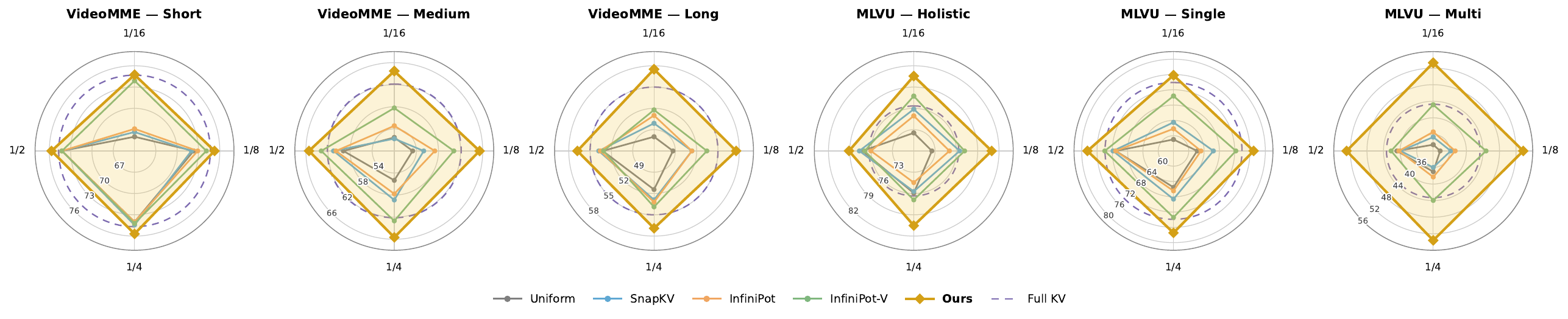}
\caption{\textbf{Per-task accuracy across compression ratios on Qwen2-VL-7B.}
Each radar compares methods across compression ratios; the dashed ring denotes Full KV.}
    \label{fig:radar_methods}
\end{figure}

\subsection{Compression Ratios}
\label{sec:exp-compression}
Figure~\ref{fig:radar_methods} sweeps the compression ratio
$\{1/16, 1/8, 1/4, 1/2\}$ on Qwen2-VL-7B across the three VideoMME
splits (Short, Medium, Long) and three MLVU task types (Holistic,
Single, Multi), comparing our method against Uniform, SnapKV,
InfiniPot, InfiniPot-V, and the Full KV reference. Our method
consistently traces the outer edge of every radar regardless of ratio,
and on several tasks lies entirely outside the Full KV ring holds across task types and
length splits rather than being an artifact of any single benchmark.
The advantage is largest at aggressive ratios ($1/16$ and $1/8$), where
all baselines visibly contract toward the centre while ours remains
flat, supporting the claim that gains come from \emph{which} tokens
are kept rather than how many. Per-ratio numerical results across all
backbones are reported in Appendix B, Table ~\ref{tab:qwen_results}.

\subsection{Qualitative Analysis}
\label{sec:exp-qualitative}
We further assess whether the retained cache is \emph{representative} of the
full accumulated KV cache, beyond downstream accuracy. For each
video, we first run the model without compression to obtain a reference full
KV cache, and then run each compression method under the same budget to record the retained tokens. For every full-cache token, we compute the
cosine distance to its nearest retained neighbour and plot the CDF (see Fig.~\ref{fig:cdf});
curves further to the left indicate better coverage. To avoid bias
toward our own selection objective, we report this CDF under three
distance metrics: joint $K\|V$ (after per-component normalisation),
$K$-only, and $V$-only. Fig.~\ref{fig:cdf} shows the result for the \emph{MLVU anomaly recognition}
task with Qwen2-VL-7B at $|M|{=}6\text{K}$; CDFs for the
remaining MLVU tasks are in Appendix~\ref{app:qualitative}. Across all
three metrics, CoRDS covers the full cache better than InfiniPot-V, with peak
gaps of $\Delta = 0.136$ (Joint $K\|V$), $0.156$ ($K$-only), and
$0.084$ ($V$-only) at small distances ($d \approx 0.08\text{--}0.12$).
Since the improvement also appears under $K$-only and $V$-only
metrics, 
the gain is unlikely to be an artifact of evaluating with the same metric used for selection. This provides a plausible explanation for the downstream gains in Sec.~\ref{sec:exp-offline}: CoRDS retains a more representative and less redundant subset of the visual KV cache.\\

\begin{figure}[t]
  \centering
  \includegraphics[width=0.8\linewidth]{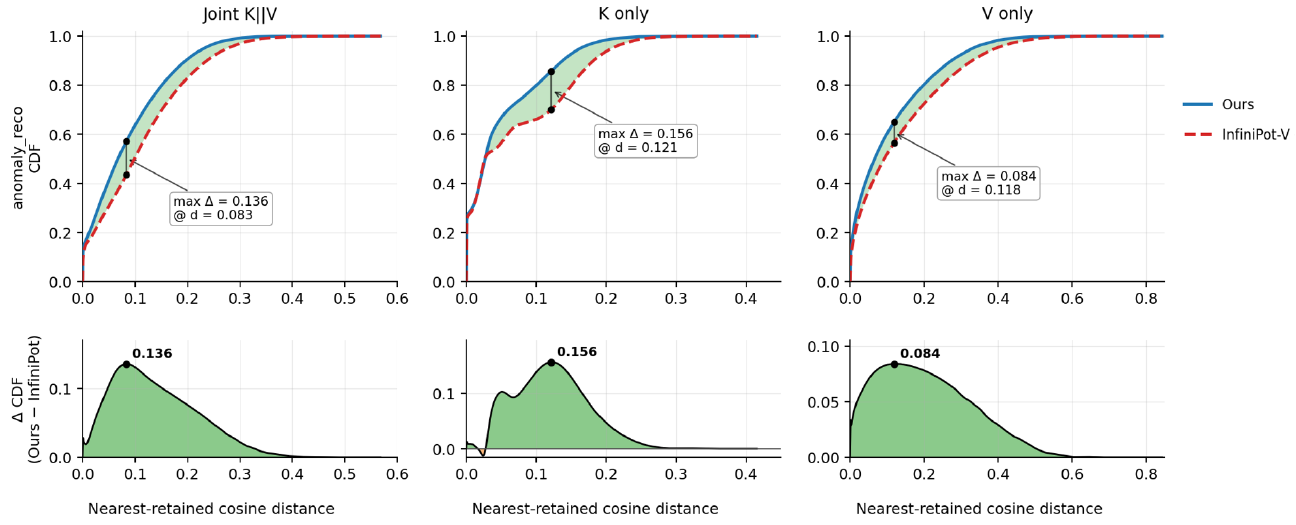}
  \caption{\textbf{Coverage CDF on MLVU anomaly recognition} (Qwen2-VL-7B, $|M|{=}6\text{K}$).
Curves further left indicate better full-cache coverage under joint $K\|V$, $K$-only, and $V$-only metrics.}
  \label{fig:cdf}
\end{figure}

\subsection{Ablations: Selector, Feature Space, and Layer Subset}
\label{sec:exp-ablations-selector}
 
\captionsetup{font=small}
\renewcommand{\arraystretch}{0.95}
 
\newcommand{\stepbar}[2]{%
  \rowcolor{StepBlue}\multicolumn{4}{@{}l}{\textbf{Step #1.} #2}}
\newcommand{\locked}[1]{\textcolor{gray!80}{\textit{#1}}}
 
\begin{wraptable}{r}{0.42\textwidth}
\vspace{-1.2em}
\begin{minipage}{\linewidth}
  \caption{%
    \textbf{Design journey:}
  Each step locks in one choice ($\bigstar$) which feeds the next.}
  \label{tab:design-journey}
  \centering
  \small
  \setlength{\tabcolsep}{3pt}
  \setlength{\aboverulesep}{0pt}
  \setlength{\belowrulesep}{0pt}
  \setlength{\extrarowheight}{0pt}
  \begin{tabular}{@{}l l
                  S[table-format=1.3]
                  S[table-format=1.2]}
    \toprule
    & \textbf{Variant} & {\textbf{Acc.}\,$\uparrow$} & {\textbf{Time}\,$\downarrow$} \\
    \midrule
    \stepbar{(i)}{\emph{Selector}} \\
    & K-means representative           & 60.0 & 1.93$\times$ \\
    & Greedy shortlist      & 66.5 & 1.19$\times$ \\
    \rowcolor{ChosenYellow}
    $\bigstar$ & \textbf{D$^2$ style farthest-point}  & 70.0 & 1.00$\times$ \\
    \midrule
    \stepbar{(ii)}{\emph{Feature space}} \\
    & K-only ($k_i$)                   & 66.0 & 1.00$\times$ \\
    & V-only ($v_i$)                   & 66.5 & 1.07$\times$ \\
    \rowcolor{ChosenYellow}
    $\bigstar$ & \textbf{KV-joint $[k_i;v_i]$}    & 70.0 & 1.08$\times$ \\
    \midrule
    \stepbar{(iii)}{\emph{Orth.\ compression: which layers?}} \\
    & Top 50\% layers                  & 62.0 & 0.94$\times$ \\
    & Top 25\% layers                  & 74.5 & 0.81$\times$ \\
    \rowcolor{ChosenYellow}
    $\bigstar$ & \textbf{Bottom-25\% layers}&77.0 & 2.01$\times$ \\
    \bottomrule
  \end{tabular}
\end{minipage}
\vspace{-1em}
\end{wraptable}
Table~\ref{tab:design-journey} reads top to bottom as controlled
decisions, each locking in the previous winner ($\bigstar$) on a
held-out MLVU PlotQA slice. \textbf{(i) Selector:} D$^2$ style
farthest-point matches the best accuracy at the lowest wall-clock.
\textbf{(ii) Feature space:} keys-only and values-only both
underperform; KV-joint $[k_i; v_i]$ recovers full accuracy, indicating both subspaces carry complementary information. \textbf{(iii) Layer
subset:} all-layer orthogonality fails to help
(Appendix~\ref{app:ablations-extended}), so we restrict it to a subset; \emph{bottom 25\%} is the accuracy-prioritized variant. Additional ablations on $\alpha, \eta,\lambda,\varepsilon_0)$ are in Appendix \ref{app:hyperparam} (Tables \ref{tab:abl-alpha}, \ref{tab:abl-eta}, \ref{tab:abl-lambda}, and \ref{tab:abl-eps}, respectively), and follower configurations are covered in Appendix \ref{app:efficiency}, Table \ref{tab:follower_ablation_full}. 
\section{Conclusion}

We presented \textbf{CoRDS}, a training-free, query-agnostic KV-cache compressor that reframes streaming KV cache compression as representative coreset selection. CoRDS combines three ingredients: a joint KV representation justified by an attention-output error bound, a bicriteria $D^2$-style farthest-first objective balancing retrieval-side and content-side coverage, and an orthogonal anti-redundancy regularizer that discourages selecting tokens which lie in directions already spanned by the current subset. A layer-selective schedule with cross-layer cascade keeps overhead on par with existing streaming compressors. Across four open VLMs and five long-video and streaming benchmarks, CoRDS consistently outperforms heuristic baselines under matched budgets and, at aggressive compression ratios, often matches or exceeds the uncompressed Full-KV reference, indicating that \emph{which} tokens are kept matters more than \emph{how many}. Coverage-CDF diagnostics confirm that gains arise from a more representative and less redundant retained cache. Promising next steps include lightweight late-stage query conditioning, adaptive per-layer budgets, and extending the coreset view to audio-visual and embodied long-context agents.
% \clearpage
% \begin{ack}
% Omitted for anonymous submission.
% \end{ack}

{
\small
\bibliography{references.bib}
}

% \newpage

\appendix
\section{Theory and Proofs}
\label{app:coreset-theory}

This appendix provides the formal statements and proofs underlying two
ingredients of our method: (i) the bicriteria coverage objective that we
minimise as a tractable surrogate for the worst-case attention-output
deviation of the compressed cache (Proposition~\ref{prop:attn-approx}), and
(ii) the log-determinant guarantee that motivates the orthogonal
anti-redundancy term (Theorem~\ref{thm:logdet}).

\subsection{Notation and Setup}

Let $\mathcal{C} = \{(\kk_i,\vv_i)\}_{i=1}^N$ with
$\kk_i\in\R^{d_k}$, $\vv_i\in\R^{d_v}$.
The attention output for query $\mathbf{q}\in\R^{d_k}$ is
\begin{equation}
  \mathrm{Attn}(\mathbf{q},\mathcal{C})
  = \sum_{i=1}^N a_i(\mathbf{q})\,\vv_i,
  \qquad
  a_i(\mathbf{q})
  = \frac{\exp(\mathbf{q}^\top\kk_i/\sqrt{d_k})}
         {\sum_{j=1}^N \exp(\mathbf{q}^\top\kk_j/\sqrt{d_k})},
  \label{eq:app-attn}
\end{equation}
so $a_i(\mathbf{q})\ge 0$ and $\sum_i a_i(\mathbf{q}) = 1$ for every $\mathbf{q}$.
Let $\mathcal{S}\subset[N]$ with $|\mathcal{S}|=b$ be a selected subset, and
define the \emph{cluster-weighted} compressed attention:
for each $j\in\mathcal{S}$ let $\mathcal{C}_j = \{i : j^\star(i)=j\}$ where
$j^\star(i) = \argmin_{j'\in\mathcal{S}}\|\bphi_i - \bphi_{j'}\|$
is the nearest representative of token $i$ in the joint feature space
$\bphi_i = [\kk_i;\vv_i]\in\R^{d_k+d_v}$.
Assign cluster weight $w_j(\mathbf{q}) = \sum_{i\in\mathcal{C}_j} a_i(\mathbf{q})$,
so that the compressed attention output is
\begin{equation}
  \mathrm{Attn}_w(\mathbf{q},\mathcal{S})
  = \sum_{j\in\mathcal{S}} w_j(\mathbf{q})\,\vv_j.
  \label{eq:app-attn-compressed}
\end{equation}
This is the natural cluster-centroid approximation: every token $i$ contributes
its attention weight to its nearest selected representative.

\subsection{Attention Approximation Bound}

\begin{proposition}[Attention output error is bounded by the bicriteria quantization error]
  \label{prop:attn-approx}
  For any query $\mathbf{q}$ and any subset $\mathcal{S}$ with cluster assignment
  $j^\star$ defined above,
  \begin{equation}
    \bigl\|\mathrm{Attn}(\mathbf{q},\mathcal{C})
           - \mathrm{Attn}_w(\mathbf{q},\mathcal{S})\bigr\|_2
    \;\le\;
    \sqrt{\,\sum_{i=1}^N a_i(\mathbf{q})\cdot\min_{j\in\mathcal{S}}\|\vv_i - \vv_j\|^2\,}.
    \label{eq:attn-error-bound}
  \end{equation}
  Moreover, since
  $\min_{j\in\mathcal{S}}\|\vv_i-\vv_j\|^2 \le \tfrac{1}{1-\alpha}\,d_\alpha(i,\mathcal{S})$
  for any $\alpha\in[0,1)$, the bound is controlled by the bicriteria
  quantization surrogate of Eq.~\eqref{eq:dalpha} in the main text:
  \begin{equation}
    \bigl\|\mathrm{Attn}(\mathbf{q},\mathcal{C})
           - \mathrm{Attn}_w(\mathbf{q},\mathcal{S})\bigr\|_2
    \;\le\;
    \sqrt{\,\frac{1}{1-\alpha}
          \sum_{i=1}^N a_i(\mathbf{q})\cdot d_\alpha(i,\mathcal{S})\,}.
    \label{eq:attn-error-bound-dalpha}
  \end{equation}
  In particular, the right-hand side depends on \emph{both} the key-space
  and value-space quantization errors of $\mathcal{S}$, which is what
  motivates joint $(\kk,\vv)$ selection in the main text.
\end{proposition}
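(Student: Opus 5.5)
The plan is to write the error as an attention-weighted average of per-token replacement errors, and then use convexity. The key observation is that the cluster weights simply redistribute mass. Since $w_j(\mathbf{q})=\sum_{i\in\mathcal{C}_j}a_i(\mathbf{q})$ and the clusters $\{\mathcal{C}_j\}_{j\in\mathcal{S}}$ partition $[N]$, regrouping the sum by cluster gives
\[
  \mathrm{Attn}(\mathbf{q},\mathcal{C})-\mathrm{Attn}_w(\mathbf{q},\mathcal{S})
  =\sum_{i=1}^N a_i(\mathbf{q})\bigl(\vv_i-\vv_{j^\star(i)}\bigr).
\]
This is an exact identity. Note that no softmax renormalisation over $\mathcal{S}$ is involved, because $w_j$ is defined directly from the full-cache weights.

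Next, I would take norms and apply the triangle inequality to get $\sum_i a_i(\mathbf{q})\|\vv_i-\vv_{j^\star(i)}\|$. Since $a_i(\mathbf{q})\ge 0$ and $\sum_i a_i(\mathbf{q})=1$, Jensen's inequality (equivalently, Cauchy--Schwarz with weights $\sqrt{a_i}\cdot\sqrt{a_i}\|\cdot\|$) bounds this by $\sqrt{\sum_i a_i(\mathbf{q})\|\vv_i-\vv_{j^\star(i)}\|^2}$.

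The main obstacle is the last step: replacing $\|\vv_i-\vv_{j^\star(i)}\|^2$ by $\min_{j\in\mathcal{S}}\|\vv_i-\vv_j\|^2$. With $j^\star$ taken as the nearest representative in the joint space $\bphi$, only $\|\vv_i-\vv_{j^\star(i)}\|^2\ge\min_j\|\vv_i-\vv_j\|^2$ holds, which is the wrong direction. The way out is to observe that the identity and the Jensen step hold for \emph{any} assignment map $i\mapsto j(i)\in\mathcal{S}$. So I would state and prove the argument for a general assignment, and obtain each inequality by choosing the assignment appropriately.
\begin{itemize}
  \item For Eq.~\eqref{eq:attn-error-bound}, take $j(i)\in\argmin_{j\in\mathcal{S}}\|\vv_i-\vv_j\|$. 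This is exact.
  \item For the joint assignment $j^\star$ itself, the honest bound is $\|\vv_i-\vv_{j^\star(i)}\|^2\le\|\bphi_i-\bphi_{j^\star(i)}\|^2=\min_j\bigl(\|\kk_i-\kk_j\|^2+\|\vv_i-\vv_j\|^2\bigr)$. This is the $\alpha=\tfrac12$ version of the bicriteria distance, up to a factor of $2$.
\end{itemize}
I would record this caveat explicitly, noting that the weights $w_j$ in Eq.~\eqref{eq:app-attn-compressed} should be understood with respect to whichever assignment is used.

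For Eq.~\eqref{eq:attn-error-bound-dalpha}, I would take $j(i)\in\argmin_{j\in\mathcal{S}}\bigl[\alpha\|\kk_i-\kk_j\|^2+(1-\alpha)\|\vv_i-\vv_j\|^2\bigr]$. For this $j(i)$, dropping the nonnegative key term gives $(1-\alpha)\|\vv_i-\vv_{j(i)}\|^2\le d_\alpha(i,\mathcal{S})$. Since $\min_j\|\vv_i-\vv_j\|^2\le\|\vv_i-\vv_{j(i)}\|^2$, this yields the inequality claimed in the statement. Substituting it into the Jensen bound, using either Eq.~\eqref{eq:attn-error-bound} or directly the $d_\alpha$-assignment, gives Eq.~\eqref{eq:attn-error-bound-dalpha}.

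The final remark of the proposition follows by inspection. The right-hand side involves $d_\alpha$, which contains both $\|\kk_i-\kk_j\|^2$ and $\|\vv_i-\vv_j\|^2$, and the weights $a_i(\mathbf{q})$ themselves depend on the keys.
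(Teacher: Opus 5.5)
Your skeleton is the paper's own proof. You regroup by cluster to get $\sum_i a_i(\mathbf q)(\vv_i-\vv_{j^\star(i)})$, apply the triangle inequality, then use Cauchy--Schwarz with the probability weights. You diverge only at the last step, and there you are right and the paper is not. The paper asserts $\|\vv_i-\vv_{j^\star(i)}\|^2=\min_{j\in\mathcal S}\|\vv_i-\vv_j\|^2$ ``by definition of $j^\star$''. This fails because $j^\star$ is the nearest neighbour in the joint space.

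In fact Eq.~\eqref{eq:attn-error-bound} is false as stated for the joint assignment. Take $d_k=d_v=1$, $(\kk_1,\vv_1)=(0,0)$, $(\kk_2,\vv_2)=(10,1)$, $(\kk_3,\vv_3)=(10,0)$ and $\mathcal S=\{1,2\}$. Then $j^\star(3)=2$, since the joint distances are $10$ versus $1$. The error equals $a_3(\mathbf q)>0$ for every $\mathbf q$, while the right-hand side is $0$ because $\vv_3=\vv_1$.

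So your passage to a general assignment is necessary rather than cosmetic. The price is the one you record: $\mathrm{Attn}_w$ in Eq.~\eqref{eq:app-attn-compressed} must be redefined with the chosen assignment. The value-space assignment is the cleanest choice, since it gives both displayed inequalities simultaneously for every $\alpha\in[0,1)$.

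Two smaller remarks. First, in the second part the paper writes $d_\alpha(i,\mathcal S)=\alpha\min_j\|\kk_i-\kk_j\|^2+(1-\alpha)\min_j\|\vv_i-\vv_j\|^2$. This is also incorrect, since a minimum of a sum only dominates the sum of the minima. It is harmless, because only the resulting lower bound $d_\alpha(i,\mathcal S)\ge(1-\alpha)\min_j\|\vv_i-\vv_j\|^2$ is used. Your argument through the minimiser of $d_\alpha$ gets this inequality without the false identity.

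Second, your observation about the joint assignment can be sharpened. The quantity $\frac{1}{1-\alpha}d_\alpha(i,\mathcal S)=\min_{j\in\mathcal S}\bigl[\frac{\alpha}{1-\alpha}\|\kk_i-\kk_j\|^2+\|\vv_i-\vv_j\|^2\bigr]$ is nondecreasing in $\alpha$. At $\alpha=\tfrac12$ it equals $\|\bphi_i-\bphi_{j^\star(i)}\|^2$. So your factor of $2$ is precisely $\frac{1}{1-\alpha}$, and Eq.~\eqref{eq:attn-error-bound-dalpha} holds verbatim for the paper's $j^\star$ whenever $\alpha\in[\tfrac12,1)$. For $\alpha<\tfrac12$ it can fail; at $\alpha=0$ the example above already does it. That is exactly the value-favouring regime, including the default $\alpha=0.25$, which the paper invokes this bound to justify.
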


\begin{proof}
Write the error as a weighted sum over clusters:
\begin{align}
  \mathrm{Attn}(\mathbf{q},\mathcal{C})
  - \mathrm{Attn}_w(\mathbf{q},\mathcal{S})
  &= \sum_{i=1}^N a_i(\mathbf{q})\,\vv_i
   - \sum_{j\in\mathcal{S}} \Bigl(\sum_{i\in\mathcal{C}_j} a_i(\mathbf{q})\Bigr)\vv_j
  \nonumber\\
  &= \sum_{i=1}^N a_i(\mathbf{q})\,\bigl(\vv_i - \vv_{j^\star(i)}\bigr).
  \label{eq:app-error-decomp}
\end{align}
Taking the $\ell_2$ norm and applying the triangle inequality gives
\begin{align}
  \bigl\|\mathrm{Attn}(\mathbf{q},\mathcal{C}) - \mathrm{Attn}_w(\mathbf{q},\mathcal{S})\bigr\|_2
  &\le \sum_{i=1}^N a_i(\mathbf{q})\,\|\vv_i - \vv_{j^\star(i)}\|_2.
  \label{eq:app-triangle}
\end{align}
Cauchy--Schwarz with the probability weights $a_i(\mathbf{q})$ (which sum to
one) yields
\begin{align}
  \sum_{i=1}^N a_i(\mathbf{q})\,\|\vv_i - \vv_{j^\star(i)}\|_2
  &\le \sqrt{\sum_{i=1}^N a_i(\mathbf{q})}
       \cdot\sqrt{\sum_{i=1}^N a_i(\mathbf{q})\,\|\vv_i - \vv_{j^\star(i)}\|_2^2}
  \nonumber\\
  &= \sqrt{\sum_{i=1}^N a_i(\mathbf{q})\,\|\vv_i - \vv_{j^\star(i)}\|_2^2},
  \label{eq:app-cs}
\end{align}
where the last equality uses $\sum_i a_i(\mathbf{q})=1$.
Since $\|\vv_i - \vv_{j^\star(i)}\|^2 = \min_{j\in\mathcal{S}}\|\vv_i-\vv_j\|^2$
by definition of $j^\star$, this establishes Eq.~\eqref{eq:attn-error-bound}.

For Eq.~\eqref{eq:attn-error-bound-dalpha}, observe that
\begin{equation}
  d_\alpha(i,\mathcal{S})
  = \alpha\min_{j\in\mathcal{S}}\|\kk_i-\kk_j\|^2
    + (1-\alpha)\min_{j\in\mathcal{S}}\|\vv_i-\vv_j\|^2
  \;\ge\; (1-\alpha)\min_{j\in\mathcal{S}}\|\vv_i-\vv_j\|^2,
\end{equation}
so $\min_{j\in\mathcal{S}}\|\vv_i-\vv_j\|^2 \le d_\alpha(i,\mathcal{S})/(1-\alpha)$.
Substituting into Eq.~\eqref{eq:attn-error-bound} gives
Eq.~\eqref{eq:attn-error-bound-dalpha}, in which the right-hand side mixes
both $\|\kk_i-\kk_j\|^2$ and $\|\vv_i-\vv_j\|^2$ contributions through
$d_\alpha$.
\end{proof}

\subsection{Log-Determinant Guarantee for Greedy Orthogonalization}

We now justify the orthogonal anti-redundancy term used in
Eq.~\eqref{eq:orth-score} of the main text. The greedy rule that selects the
candidate with the largest residual norm is not an ad-hoc heuristic: it is
exactly equivalent to maximising the marginal gain of a log-determinant
coverage objective, which is monotone submodular and therefore admits a
$(1-e^{-1})$ approximation guarantee.

\begin{theorem}[Greedy orthogonalization achieves a $(1-e^{-1})$ log-det approximation]
  \label{thm:logdet}
  Define the log-determinant coverage of selected keys as
  $\mathcal{F}(\mathcal{S}) = \log\det\!\bigl(\mathbf{U}_\mathcal{S}^\top\mathbf{U}_\mathcal{S} + \epsilon I\bigr)$
  where $\mathbf{U}_\mathcal{S} = [\kk_j]_{j\in\mathcal{S}}$ and $\epsilon>0$
  is a small regularizer.
  The greedy rule
  $i_{t+1} = \argmax_{i\notin\St}\norm{\rv_i}^2$
  with $\rv_i$ defined as in Eq.~\eqref{eq:residuals} of the main text
  produces a set $\hat{\mathcal{S}}_b$ satisfying
  \begin{equation}
    \mathcal{F}(\hat{\mathcal{S}}_b)
    \;\ge\;
    \bigl(1 - e^{-1}\bigr)\,\mathcal{F}(\mathcal{S}^\star_b),
    \label{eq:logdet-bound}
  \end{equation}
  where $\mathcal{S}^\star_b = \argmax_{|\mathcal{S}|=b}\mathcal{F}(\mathcal{S})$.
\end{theorem}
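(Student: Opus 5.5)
The plan is to reduce the statement to the classical Nemhauser--Wolsey--Fisher theorem for greedy maximisation of a normalized, monotone, submodular set function under a cardinality constraint. Three facts are needed: $\mathcal{F}(\emptyset)=0$, the marginal gains of $\mathcal{F}$ are nonnegative and nonincreasing, and the residual rule $\argmax_i\norm{\rv_i}^2$ picks the element of largest marginal gain. Normalization is immediate, since the determinant of the empty $0\times 0$ Gram matrix is $1$, so $\mathcal{F}(\emptyset)=\log 1=0$.

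\textbf{Step 1: marginal gain.} Fix $\mathcal{S}$ with $|\mathcal{S}|=t$ and write $G=\mathbf{U}_\mathcal{S}^\top\mathbf{U}_\mathcal{S}+\epsilon I$. Bordering $G$ by the new column $\kk_i$ and applying the Schur complement formula gives
\[
\mathcal{F}(\mathcal{S}\cup\{i\})-\mathcal{F}(\mathcal{S})=\log\bigl(\epsilon+\rho_\epsilon(i\mid\mathcal{S})\bigr),
\]
where
\[
\rho_\epsilon(i\mid\mathcal{S})=\kk_i^\top\bigl(I-\mathbf{U}_\mathcal{S}(\mathbf{U}_\mathcal{S}^\top\mathbf{U}_\mathcal{S}+\epsilon I)^{-1}\mathbf{U}_\mathcal{S}^\top\bigr)\kk_i .
\]
As $\epsilon\to 0$, the matrix $\mathbf{U}_\mathcal{S}(\mathbf{U}_\mathcal{S}^\top\mathbf{U}_\mathcal{S}+\epsilon I)^{-1}\mathbf{U}_\mathcal{S}^\top$ converges to the projector $P_t^K$ of Eq.~\eqref{eq:projector}, with error $O(\epsilon)$ that is uniform over the finite candidate pool. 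Hence $\rho_\epsilon(i\mid\mathcal{S})=\norm{\rv_i}^2+O(\epsilon)$. Because $\log(\epsilon+\cdot)$ is increasing, maximising the gain is the same as maximising $\rho_\epsilon$. The exact-residual rule of the theorem therefore coincides with the exact greedy rule whenever the top residual is separated by more than $O(\epsilon)$. Otherwise it is an additive-$O(\epsilon)$ approximate greedy step, and the standard approximate-greedy analysis then loses only a term that vanishes as $\epsilon\to 0$. I would make this precise by a perturbation argument, or equivalently by running the analysis directly with the gains $\log(\epsilon+\norm{\rv_i}^2)$.

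\textbf{Step 2: submodularity.} For $\mathcal{S}\subseteq\mathcal{T}$, the matrix $I-\mathbf{U}(\mathbf{U}^\top\mathbf{U}+\epsilon I)^{-1}\mathbf{U}^\top$ equals $\epsilon(\mathbf{U}\mathbf{U}^\top+\epsilon I)^{-1}$. Since $\mathbf{U}_\mathcal{T}\mathbf{U}_\mathcal{T}^\top\succeq\mathbf{U}_\mathcal{S}\mathbf{U}_\mathcal{S}^\top$ and matrix inversion is operator-antitone, this gives $\rho_\epsilon(i\mid\mathcal{T})\le\rho_\epsilon(i\mid\mathcal{S})$. The gains are therefore nonincreasing, which is diminishing returns.

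\textbf{Step 3: monotonicity, the main obstacle.} A gain is nonnegative only when $\epsilon+\rho_\epsilon(i\mid\mathcal{S})\ge 1$. For small $\epsilon$ this fails for near-span candidates: their gain is close to $\log\epsilon<0$. Because $\mathcal{F}$ is then not globally monotone, the Nemhauser bound cannot be quoted verbatim. My first attempt is to show that the standard telescoping argument only uses the gains actually realised along the greedy chain and along chains $\hat{\mathcal{S}}_t\cup\mathcal{S}^\star_b$. It therefore suffices to assume a scale condition: every residual encountered along these chains is at least $1-\epsilon$. This holds in the regime the method targets, where raw LLM key norms are much larger than $1$ and $b\le\operatorname{rank}$ of the candidate keys. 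I would state this condition explicitly as the hypothesis under which Eq.~\eqref{eq:logdet-bound} is proved.

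\textbf{Step 4: telescoping.} Under that condition, submodularity and nonnegative gains give
\[
\mathcal{F}(\mathcal{S}^\star_b)-\mathcal{F}(\hat{\mathcal{S}}_t)\le b\,\bigl(\mathcal{F}(\hat{\mathcal{S}}_{t+1})-\mathcal{F}(\hat{\mathcal{S}}_t)\bigr).
\]
Iterating this recursion from $t=0$ to $b-1$ yields $\bigl(1-(1-1/b)^b\bigr)\ge 1-e^{-1}$, which is Eq.~\eqref{eq:logdet-bound}.

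If the scale condition cannot be guaranteed, the fallback is to apply the same argument to the shifted function $\mathcal{F}(\mathcal{S})-|\mathcal{S}|\log\epsilon$. This is always monotone, and the greedy rule is identical for it, since the shift adds the same constant to every gain. The resulting bound then transfers to $\mathcal{F}$ only up to a $b\log\epsilon$ correction, so this fallback does not give Eq.~\eqref{eq:logdet-bound} exactly as stated.
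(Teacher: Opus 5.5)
Your skeleton (Schur-complement marginal gain, diminishing returns via an operator inequality, then the Nemhauser--Wolsey--Fisher telescoping) is the same as the paper's. Your Step~3 objection is correct, and it is exactly where the paper's proof fails. Your formula $\mathcal{F}(\mathcal{S}\cup\{i\})-\mathcal{F}(\mathcal{S})=\log(\epsilon+\rho_\epsilon(i\mid\mathcal{S}))$ is right. The paper's display Eq.~\eqref{eq:marginal-gain} subtracts a spurious $\log\epsilon$ and obtains $\log(1+\rho_\epsilon/\epsilon)\ge 0$. It then asserts monotonicity of $\mathcal{F}$ ``from the positive semi-definiteness of the Gram matrix increment'', which fails for $\epsilon<1$. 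Those are the gains of $\log\det(I+\mathbf{U}_\mathcal{S}^\top\mathbf{U}_\mathcal{S}/\epsilon)=\mathcal{F}(\mathcal{S})-|\mathcal{S}|\log\epsilon$, i.e.\ of your fallback function, for which the Nemhauser argument does go through for exact gain-greedy.

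For $\mathcal{F}$ itself the statement is false, so no argument can close Step~3. Take $b=1$ and keys with $\norm{\kk_i}^2+\epsilon<1$. Greedy is then optimal, $\mathcal{F}(\hat{\mathcal{S}}_1)=\mathcal{F}(\mathcal{S}^\star_1)=\log(\max_i\norm{\kk_i}^2+\epsilon)<0$, and $x\ge(1-e^{-1})x$ fails for $x<0$. More generally, the inequality is impossible whenever $\mathcal{F}(\mathcal{S}^\star_b)<0$. This happens automatically for small $\epsilon$ once $b>d_k$, because $\det(\mathbf{U}_\mathcal{S}^\top\mathbf{U}_\mathcal{S}+\epsilon I)$ then contains the factor $\epsilon^{\,b-\operatorname{rank}\mathbf{U}_\mathcal{S}}$ for every $b$-subset. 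A correct theorem must either use the shifted objective or carry an explicit hypothesis like yours, so you were right not to claim the statement verbatim. Your Step~2 identity $I-\mathbf{U}(\mathbf{U}^\top\mathbf{U}+\epsilon I)^{-1}\mathbf{U}^\top=\epsilon(\mathbf{U}\mathbf{U}^\top+\epsilon I)^{-1}$ is also a cleaner proof of diminishing returns than the paper's. The paper argues by span containment of exact projectors, although the gain involves the regularized one.

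Two points in your own repair still need work.

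\emph{The scale condition.} It is not implied by ``large key norms and $b\le\operatorname{rank}$''. The chain from $\mathcal{S}^\star_b$ to $\mathcal{S}^\star_b\cup\hat{\mathcal{S}}_t$ involves up to $2b$ keys, and a key of large norm can lie arbitrarily close to the span of the others. If $b>d_k$, every exact residual vanishes after $d_k$ picks. The condition therefore has to stand as a bare hypothesis.

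\emph{Passing between $\rho_\epsilon$ and $\norm{\rv_i}^2$.} Write $\sigma_j,\mathbf{u}_j$ for the nonzero singular values and left singular vectors of $\mathbf{U}_\mathcal{S}$. Then $\rho_\epsilon(i\mid\mathcal{S})=\norm{\rv_i}^2+\epsilon\sum_j(\mathbf{u}_j^\top\kk_i)^2/(\sigma_j^2+\epsilon)$, so the residual argmax and the gain argmax can differ. Even under your scale condition, an approximate-greedy analysis then gives the bound only up to an additive loss of order $b\,\epsilon\max_i\norm{\kk_i}^2/\sigma_{\min}^2$, with $\sigma_{\min}$ the smallest nonzero singular value along the greedy chain. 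That is not the clean inequality. Running the analysis ``directly with the gains $\log(\epsilon+\norm{\rv_i}^2)$'' is not an equivalent alternative. Those are not the marginal gains of $\mathcal{F}$, nor of any set function, since $\prod_t(\epsilon+\norm{\rv_{i_t}}^2)$ depends on the order of insertion. The clean fix is to define the greedy step by $\rho_\epsilon$ itself, which is exact gain-greedy. The paper glosses over the same point: it even equates $\kk_i^\top(I-P_t^{K,\epsilon})\kk_i$ with $\norm{(I-P_t^{K,\epsilon})\kk_i}^2$, although $P_t^{K,\epsilon}$ is not idempotent.
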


\begin{proof}
We show (i) the marginal gain of adding token $i$ equals $\log(1+\norm{\rv_i}^2/c_t)$
for a positive scalar $c_t$, so the greedy rule maximizing $\norm{\rv_i}^2$
also maximizes marginal $\mathcal{F}$-gain; and (ii) $\mathcal{F}$ is monotone
submodular, so the greedy maximizer achieves the $(1-e^{-1})$ bound
of \citep{nemhauser1978}.

\textit{(i) Marginal gain equals residual norm.}
Partition the Gram matrix after adding token $i$ to $\St$ as
\[
  \mathbf{U}_{t+1}^\top\mathbf{U}_{t+1}
  = \begin{bmatrix}
      \mathbf{U}_t^\top\mathbf{U}_t & \mathbf{U}_t^\top\kk_i \\
      \kk_i^\top\mathbf{U}_t & \norm{\kk_i}^2
    \end{bmatrix}.
\]
By the matrix determinant lemma (Schur complement of the $(t,t)$ block):
\begin{align}
  &\log\det\!\bigl(\mathbf{U}_{t+1}^\top\mathbf{U}_{t+1}+\epsilon I\bigr)
    - \log\det\!\bigl(\mathbf{U}_t^\top\mathbf{U}_t+\epsilon I\bigr)
  \nonumber\\
  &\quad= \log\!\Bigl(
      \norm{\kk_i}^2 + \epsilon
      - \kk_i^\top\mathbf{U}_t
        \bigl(\mathbf{U}_t^\top\mathbf{U}_t+\epsilon I\bigr)^{-1}
        \mathbf{U}_t^\top\kk_i
    \Bigr)
    - \log\epsilon
  \nonumber\\
  &\quad= \log\!\Bigl(
      1 + \tfrac{1}{\epsilon}\,
      \kk_i^\top
      \bigl(I - P_t^{K,\epsilon}\bigr)
      \kk_i
    \Bigr),
  \label{eq:marginal-gain}
\end{align}
where $P_t^{K,\epsilon} = \mathbf{U}_t(\mathbf{U}_t^\top\mathbf{U}_t+\epsilon I)^{-1}\mathbf{U}_t^\top$
is the regularized projector that converges to $P_t^K$ as $\epsilon\to 0$.
The quadratic form
$\kk_i^\top(I - P_t^{K,\epsilon})\kk_i = \norm{(I-P_t^{K,\epsilon})\kk_i}^2$
is the squared regularized residual, which equals
$\norm{\rv_i}^2$ up to an $O(\epsilon)$ term.
Since $\log(1+x/\epsilon)$ is strictly monotone increasing in $x > 0$,
the greedy rule maximizing $\norm{\rv_i}^2$ at each step also maximizes
the exact marginal $\mathcal{F}$-gain.

\textit{(ii) Submodularity of $\mathcal{F}$.}
For any $A\subseteq B$ and token $i\notin B$, the marginal gain
$\mathcal{F}(A\cup\{i\})-\mathcal{F}(A) \ge \mathcal{F}(B\cup\{i\})-\mathcal{F}(B)$
holds because the residual of $\kk_i$ with respect to a larger selected set
can only decrease (the span of $B$ contains the span of $A$, so more of $\kk_i$
is already projected out). Formally, if $A\subseteq B$ then
$P_A \preceq P_B$ in the Loewner order, hence
$(I-P_B)\preceq(I-P_A)$ and the Schur complement in Eq.~\eqref{eq:marginal-gain}
is non-increasing as the conditioned set grows.
Monotonicity $\mathcal{F}(A)\le\mathcal{F}(B)$ for $A\subseteq B$ follows
directly from the positive semi-definiteness of the Gram matrix increment.
By the greedy submodularity theorem of \citep{nemhauser1978},
any monotone submodular function subject to a cardinality constraint is
$(1-e^{-1})$-approximated by the greedy maximizer, yielding
Eq.~\eqref{eq:logdet-bound}.
\end{proof}

% \paragraph{Implementation Detail.}
% All models are evaluated with a uniform block-wise streaming inference
% configuration: block size 32, 24 compressed frames per block, and a
% maximum of 768 sampled frames per video. We fix the compressed KV budget
% to $|M|{=}3$K tokens across all of our runs unless otherwise stated, which
% is $2\times$ more aggressive than the $|M|{=}6$K used by most prior
% compression baselines. Reported metrics are task accuracy and, where
% relevant, mean and median per-sample inference time on an L40S.
% Ablations are run on a held-out 200-sample slice of MLVU PlotQA to keep
% turnaround manageable; all main-table numbers are on the full benchmark
% test sets. 

%%%%%%%%%Ablations
\section{Extended Experimental Results}
\label{app:experiments-extended}
 
This appendix expands the experimental section of the main paper. Appendix~\ref{app:impl-extended} gives the full implementation and reproducibility details; Appendix~\ref{app:streaming-detail} gives per-task breakdowns of the streaming results;
Appendix~\ref{app:per-backbone} reports the per-backbone, per-budget numbers;
Appendix~\ref{app:hyperparam} ablates the four scalar hyperparameters $(\alpha,\eta,\lambda,\varepsilon_0)$;
Appendix~\ref{app:ablations-extended} extends the design ablations of Section~\ref{sec:exp-ablations-selector} to the full sweep;
Appendix~\ref{app:qualitative} provides per-task coverage CDFs;
Appendix~\ref{app:efficiency} extends the memory--throughput sweep;
Appendix~\ref{app:abl-cascade} covers the cross-layer cascade analysis.
 
%% ===========================================================================
\subsection{Implementation Details}
\label{app:impl-extended}

All experiments are run on a single NVIDIA L40S 48\,GB GPU with FlashAttention-2 \citep{flashattention2}, PyTorch 2.3, \texttt{transformers} 4.45, and \texttt{bfloat16} weights and activations. Wall-clock results are reported as mean per-sample times over the corresponding evaluation slice. We evaluate four public VLMs without fine-tuning: Qwen2-VL-7B \citep{qwen2vl}, Qwen2.5-VL-3B/7B \citep{qwen25vl}, and LLaVA-NeXT-Video-7B \citep{llavanext}. We use each model's standard sampling setting, with 768 frames and approximately 50K tokens for Qwen2/2.5-VL, and 128 frames and approximately 25K tokens for LLaVA-NeXT-Video. Visual tokens are fed causally, and the question is appended only after compression.

The compressor uses compressed frames to block size ratio of 0.75, and a recent tail of $|\mathcal{R}|{=}\lfloor |M|/4 \rfloor$. The selection rule in Eq.~\ref{eq:final-score} is fixed at $(\alpha,\eta,\lambda,\varepsilon_0){=}(0.25,\,0.25,\,0.25,\,10^{-6})$ and $\beta{=}0.15$, with min-max normalization. Compression is active on the bottom 25\% of decoder layers, while layers outside the active set reuse the most recent active-layer selection.

Decoding uses greedy generation with each model's default chat template and a 256-token cap for multiple-choice benchmarks. The held-out 200-sample MLVU PlotQA slice used for Table~\ref{tab:design-journey} and per-step ablations is fixed across runs, as are all random seeds.
%===========================================================================

\begin{figure}
    \centering
    \includegraphics[width=1\linewidth]{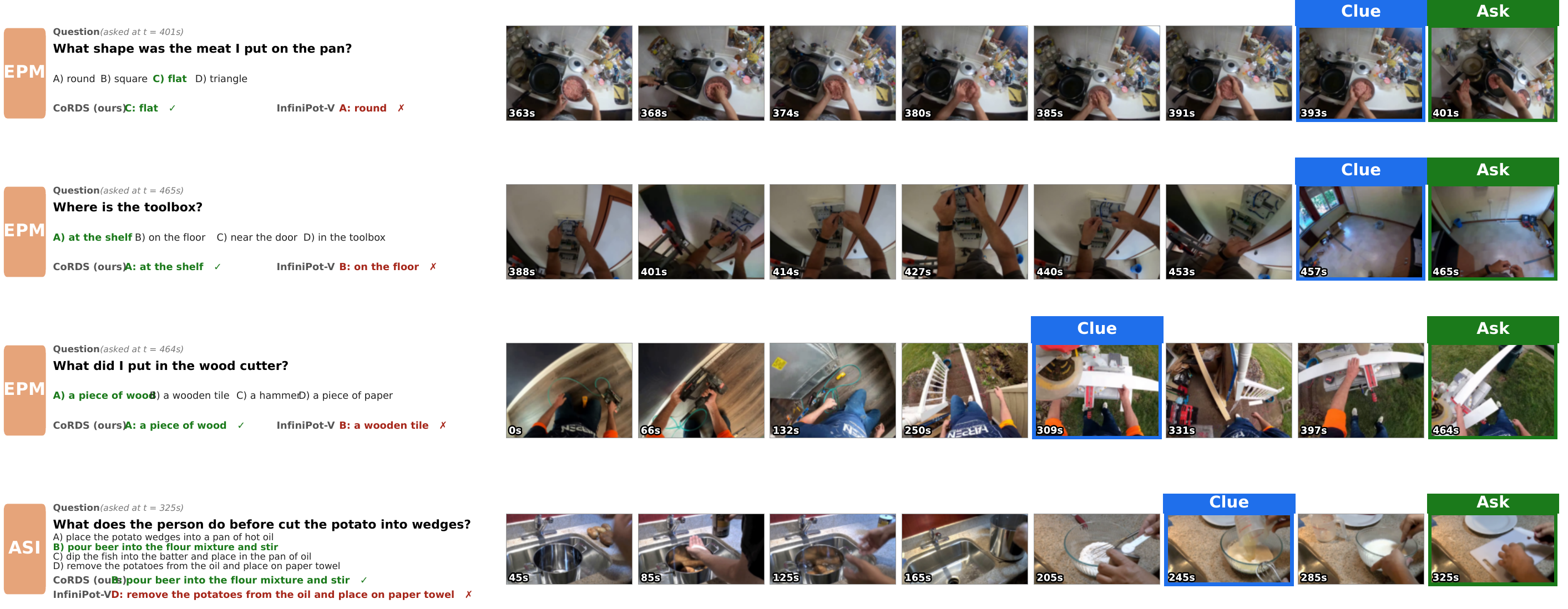}
    \caption{Backward qualitative examples on EPM/ASI: CoRDS (ours) recovers the clue laid down hundreds of seconds before the question, while InfiniPot-V drops it.}
\label{fig:BW_task}
\end{figure}

\begin{figure}
    \centering
    \includegraphics[width=1\linewidth]{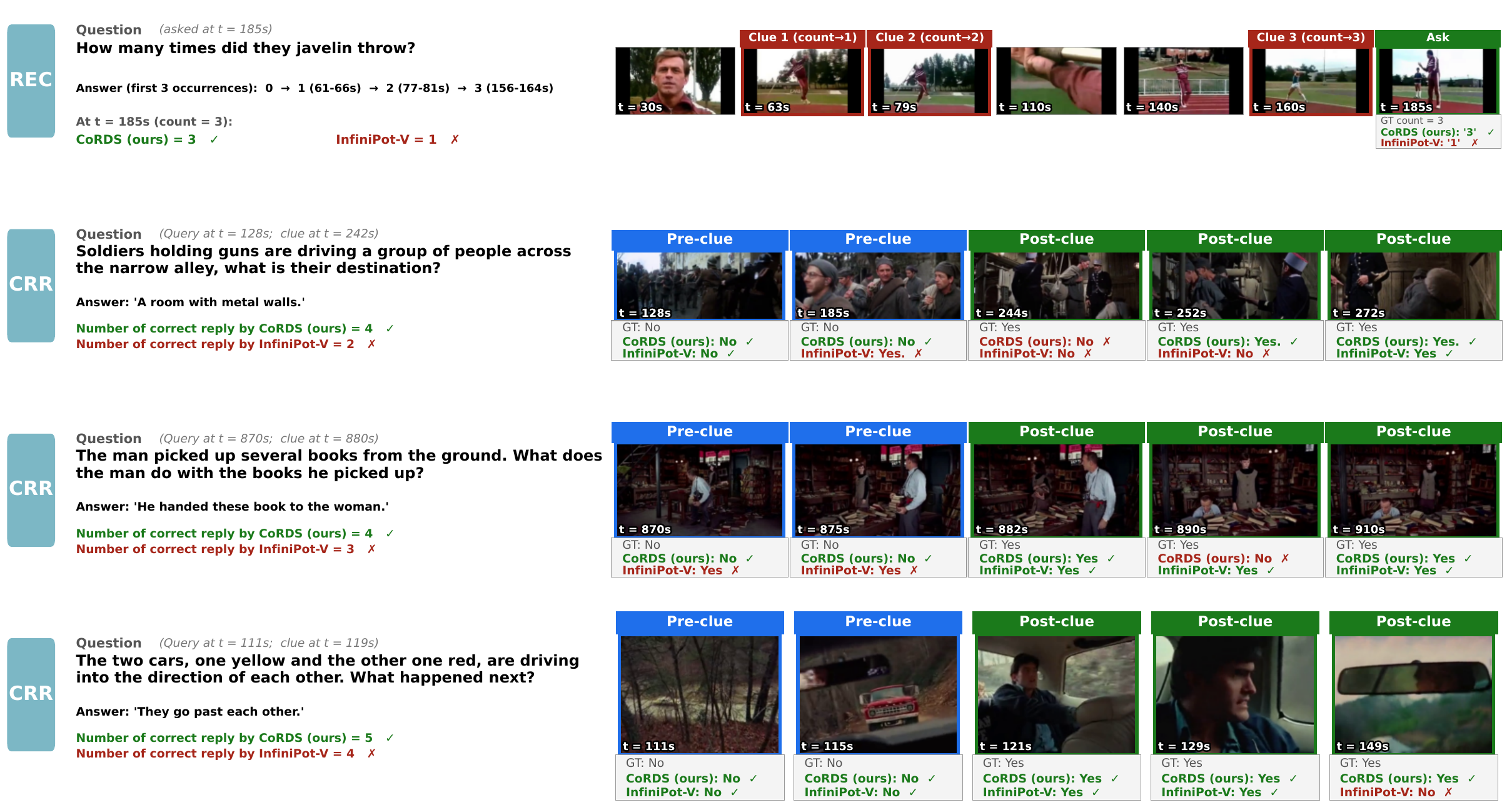}
    \caption{Forward qualitative examples on REC/CRR: CoRDS (ours) accumulates the three javelin-throw occurrences and tracks pre-/post-clue answers across time, while InfiniPot-V undercounts and answers affirmatively before the clue.}
\label{fig:FW_task}

\end{figure}

\begin{figure}
    \centering
    \includegraphics[width=1\linewidth]{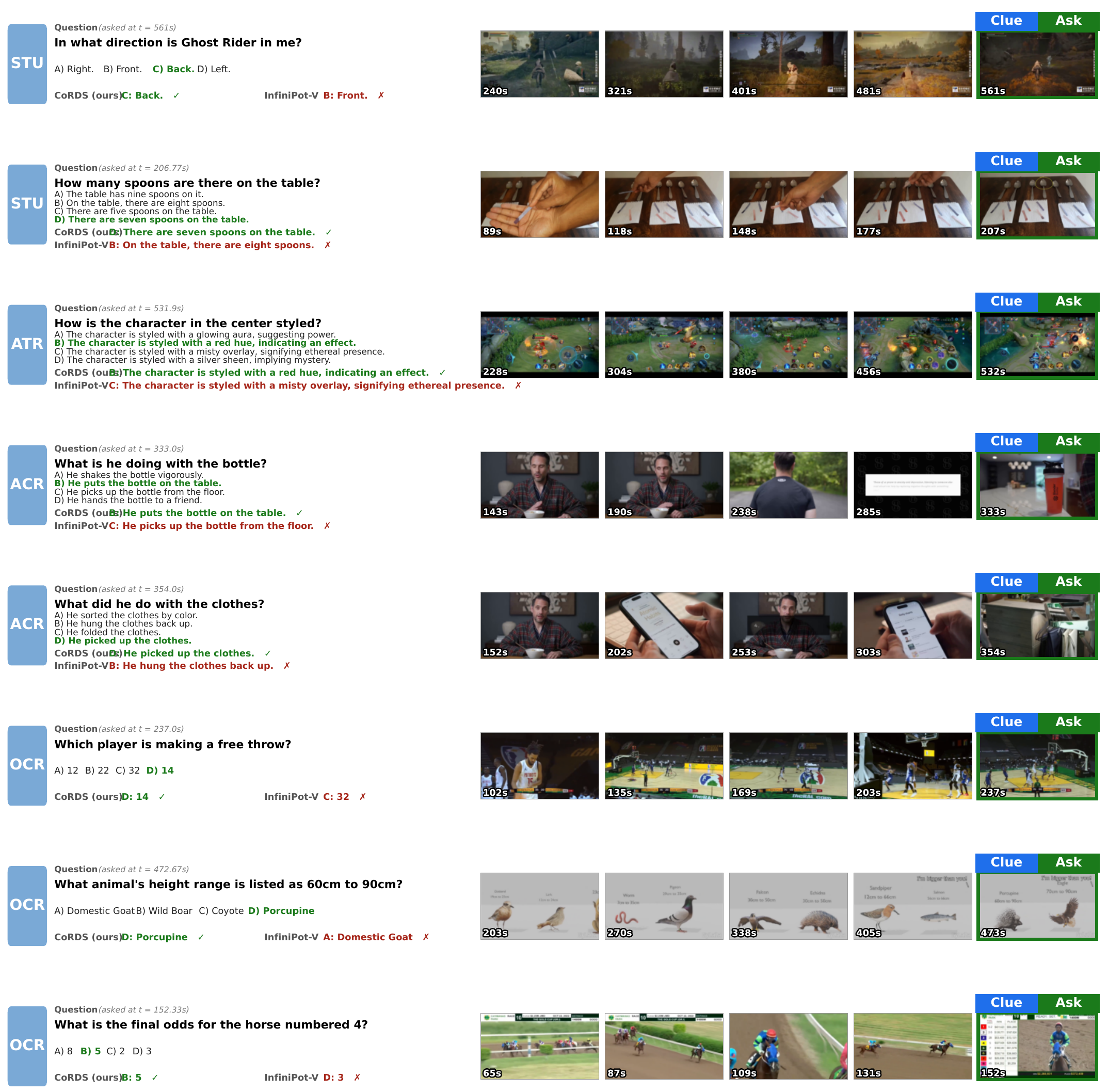}
    \caption{Real-time qualitative examples on STU/ATR/ACR/OCR: with the clue at the ask frame, CoRDS (ours) reads fine-grained on-screen content and short-horizon actions where InfiniPot-V picks plausible distractors.}
\label{fig:RT_task}

\end{figure}

\subsection{Qualitative Streaming Examples}
\label{app:streaming-detail}

Figures~\ref{fig:BW_task}, \ref{fig:RT_task}, and~\ref{fig:FW_task} show qualitative examples from the OVO-Bench Backward (BW), Real-time (RT), and Forward (FW) tracks. We compare CoRDS against InfiniPot-V on Qwen2-VL-7B at a 6K compression budget.

\paragraph{Backward (Fig.~\ref{fig:BW_task}).}
The EPM and ASI examples test long-range episodic recall. The question is asked hundreds of seconds after the relevant clue frame, so the cache must preserve evidence far beyond the recent-tail window. For example, in ``What shape was the meat I put on the pan?'' asked at $t{=}401\,$s, with the clue at $t{=}393\,$s, CoRDS retains the flat-meat clue and answers \emph{flat}, while InfiniPot-V drops it and answers \emph{round}. The same pattern appears for ``Where is the toolbox?'' at $t{=}465\,$s, ``What did I put in the wood cutter?'' at $t{=}464\,$s with the clue at $t{=}309\,$s, and the ASI cooking-procedure question at $t{=}325\,$s. In the latter case, InfiniPot-V selects a later distractor step, \emph{remove the potatoes from the oil}, whereas CoRDS preserves the true antecedent step, \emph{pour beer into the flour mixture and stir}. In all cases, the decisive clue lies outside a short temporal window, and the coverage-driven selection in CoRDS preserves the moment needed to disambiguate the answer.

\paragraph{Real-time (Fig.~\ref{fig:RT_task}).}
RT chunks end at the ask time, so the relevant clue often coincides with the most recent visual evidence. The examples span the STU, ATR, ACR, and OCR sub-tracks. CoRDS correctly reads fine-grained on-screen content, including the player number 14, the animal label \emph{Porcupine}, the final odds 5, the spoon count 7, and the character styling \emph{red hue}. It also tracks short-horizon actions, such as \emph{puts the bottle on the table}, \emph{picked up the clothes}, and \emph{Back}. By contrast, InfiniPot-V selects visually plausible but incorrect distractors in each case, such as \emph{Domestic Goat} for the 60--90\,cm height range, \emph{32} for the free-throw shooter, and \emph{misty overlay} for the red-hued character. These examples suggest that, even when the decisive evidence is recent, CoRDS preserves a cleaner trace of the relevant frames instead of over-pruning them in favor of generic context.

\paragraph{Forward (Fig.~\ref{fig:FW_task}).}
FW examples test accumulation and clue-conditioned recall. In the REC ``javelin throw'' case, asked at $t{=}185\,$s, the model must count three occurrences at 61--66\,s, 77--81\,s, and 156--164\,s. CoRDS reports the correct count of 3, while InfiniPot-V collapses to 1 after pruning the two earlier occurrences. The three CRR examples evaluate the same question at successive time points around a clue, where pre-clue probes should answer \emph{No} and post-clue probes should answer \emph{Yes}. Across the three queries, CoRDS scores 4/5, 4/5, and 5/5, compared with 2/5, 3/5, and 4/5 for InfiniPot-V. InfiniPot-V's errors are concentrated in pre-clue frames, where it hallucinates affirmative answers, such as asserting the soldiers' destination or the books' fate before the clue is observed. This behavior is consistent with weaker temporal grounding in the retained cache.

\subsection{Per-Backbone, Per-Budget Results}
\label{app:per-backbone}

\begin{table}[t]
\centering
\caption{Offline long video understanding evaluation results under memory-constrained scenario (case 3), with MC (Memory-Constrained) and QA (Query-Agnostic) conditions marked. Results are reported on (1) Video-MME -- Short: -3min, Medium: 3-30min, Long: 30min-2h, and (2) MLVU -- Holistic, Single-Detail, Multi-Detail LVU.}
\label{tab:qwen_results}
\resizebox{\textwidth}{!}{%
\begin{tabular}{c c c c c c c c c c c c c}
\toprule
\multirow{2}{*}{Case} & Streaming & \multirow{2}{*}{\shortstack{Compression\\Method}} & \multirow{2}{*}{\shortstack{Prefill\\Budget}} & \multirow{2}{*}{\shortstack{Decoding\\Budget}} & \multicolumn{4}{c}{VideoMME} & \multicolumn{4}{c}{MLVU} \\
\cmidrule(lr){6-9} \cmidrule(lr){10-13}
 & MC \quad QA & & & & Short & Medium & Long & Avg. & Holistic & Single & Multi & Avg. \\
\midrule
\multicolumn{13}{c}{\textbf{Qwen2-VL-7B}} \\
\midrule
-- & -- \quad -- & Full KV & 50K & 50K & 74.68 & 62.11 & 55.00 & 63.93 & 76.34 & 73.91 & 43.29 & 65.85 \\
\midrule
\multirow{4}{*}{Case 1} & \multirow{4}{*}{\ding{55} \quad \ding{55}} & FastV [4] & 48/3K ($R{=}2.8$) & & 54.11 & 50.11 & 48.67 & 50.96 & 69.59 & 59.40 & 33.84 & 55.01 \\
 & & ($L{=}2$) & 48/6K ($R{=}5.8$) & & 59.67 & 54.55 & 50.78 & 55.00 & 72.00 & 64.08 & 33.47 & 57.60 \\
 & & \multirow{2}{*}{SnapKV [24]} & 50K & 3K & 74.00 & 61.00 & 54.22 & 63.07 & 77.08 & 67.49 & 39.07 & 61.21 \\
 & & & 50K & 6K & 74.22 & 60.55 & 54.33 & 63.03 & 77.59 & 73.91 & 42.90 & 66.10 \\
\midrule
\multirow{4}{*}{Case 2} & \multirow{4}{*}{\ding{55} \quad \checkmark} & \multirow{2}{*}{Uniform} & 50K & 3K & 70.33 & 54.67 & 49.55 & 58.18 & 72.29 & 59.06 & 33.51 & 55.54 \\
 & & & 50K & 6K & 72.00 & 58.78 & 52.11 & 60.96 & 77.08 & 67.49 & 39.07 & 62.11 \\
 & & \multirow{2}{*}{SnapKV$^{\dagger}$} & 50K & 3K & 69.00 & 54.00 & 50.67 & 57.89 & 75.88 & 63.48 & 35.35 & 58.99 \\
 & & & 50K & 6K & 72.11 & 57.56 & 52.22 & 60.63 & 76.46 & 66.43 & 36.22 & 60.66 \\
\midrule
\multirow{20}{*}{\shortstack{Case 3\\(CKV)}} & \multirow{20}{*}{\checkmark \quad \checkmark} & \multirow{4}{*}{Uniform} & 3K & 3K & 66.00 & 52.44 & 48.00 & 55.48 & 72.54 & 59.00 & 33.51 & 55.59 \\
 & & & 6K & 6K & 72.33 & 53.33 & 48.67 & 58.11 & 72.55 & 62.19 & 33.67 & 57.00 \\
 & & & 12K & 12K & 74.00 & 55.33 & 51.44 & 60.26 & 75.94 & 65.53 & 37.01 & 60.36 \\
 & & & 24K & 24K & 74.22 & 59.22 & 53.22 & 62.22 & 77.22 & 71.10 & 40.78 & 64.18 \\
\cmidrule{3-13}
 & & \multirow{4}{*}{SnapKV$^{\ddagger}$} & 3K & 3K & 66.67 & 52.22 & 49.89 & 56.26 & 75.88 & 63.48 & 35.35 & 58.99 \\
 & & & 6K & 6K & 72.00 & 55.33 & 51.33 & 59.55 & 76.46 & 66.43 & 36.22 & 60.66 \\
 & & & 12K & 12K & 74.44 & 58.89 & 52.89 & 62.07 & 75.71 & 68.61 & 35.98 & 61.31 \\
 & & & 24K & 24K & 74.22 & 61.00 & 53.78 & 63.00 & 77.66 & 71.82 & 39.90 & 64.37 \\
\cmidrule{3-13}
 & & \multirow{4}{*}{InfiniPot [21]} & 3K & 3K & 67.11 & 54.55 & 51.00 & 57.55 & 74.94 & 61.80 & 36.60 & 58.36 \\
 & & & 6K & 6K & 72.89 & 57.33 & 51.33 & 60.52 & 75.02 & 63.18 & 37.30 & 58.50 \\
 & & & 12K & 12K & 74.00 & 57.78 & 53.22 & 61.67 & 74.46 & 66.46 & 38.30 & 60.70 \\
 & & & 24K & 24K & 74.22 & 60.55 & 53.56 & 62.78 & 76.03 & 71.11 & 40.29 & 63.71 \\
\cmidrule{3-13}
 & & \multirow{4}{*}{InfiniPot-V} & 3K & 3K & 73.89 & 57.78 & 51.78 & 61.11 & 77.73 & 70.38 & 43.15 & 64.70 \\
 & & & 6K & 6K & 74.11 & 60.78 & 53.44 & 62.78 & 77.16 & 72.31 & 44.75 & 65.82 \\
 & & & 12K & 12K & 74.22 & 62.68 & 53.89 & 63.59 & 76.90 & 73.41 & 43.97 & 65.99 \\
 & & & 24K & 24K & 74.22 & 63.22 & 53.11 & 63.52 & 76.91 & 73.97 & 42.18 & 65.73 \\
\cmidrule{3-13}
 & & \multirow{4}{*}{\textbf{CoRDS}} & \cellcolor{yellow!15}3K & \cellcolor{yellow!15}3K & \cellcolor{yellow!15}74.73 & \cellcolor{yellow!15}64.46 & \cellcolor{yellow!15}57.51 & \cellcolor{yellow!15}64.31 & \cellcolor{yellow!15}80.55 & \cellcolor{yellow!15}75.82 & \cellcolor{yellow!15}53.29 & \cellcolor{yellow!15}70.23 \\
 & & & \cellcolor{yellow!15}6K & \cellcolor{yellow!15}6K & \cellcolor{yellow!15}75.26 & \cellcolor{yellow!15}65.44 & \cellcolor{yellow!15}57.56 & \cellcolor{yellow!15}64.84 & \cellcolor{yellow!15}80.97 & \cellcolor{yellow!15}76.95 & \cellcolor{yellow!15}53.72 & \cellcolor{yellow!15}70.81 \\
 & & & \cellcolor{yellow!15}12K & \cellcolor{yellow!15}12K & \cellcolor{yellow!15}75.69 & \cellcolor{yellow!15}65.66 & \cellcolor{yellow!15}56.90 & \cellcolor{yellow!15}64.84 & \cellcolor{yellow!15}80.52 & \cellcolor{yellow!15}77.44 & \cellcolor{yellow!15}53.60 & \cellcolor{yellow!15}71.25 \\
 & & & \cellcolor{yellow!15}24K & \cellcolor{yellow!15}24K & \cellcolor{yellow!15}75.69 & \cellcolor{yellow!15}65.44 & \cellcolor{yellow!15}56.79 & \cellcolor{yellow!15}64.73 & \cellcolor{yellow!15}79.12 & \cellcolor{yellow!15}78.51 & \cellcolor{yellow!15}52.91 & \cellcolor{yellow!15}71.22 \\
\bottomrule
\end{tabular}%
}
\end{table}

Table~\ref{tab:qwen_results} reports the full budget sweep $|M|\!\in\!\{3\text{K},6\text{K},12\text{K},24\text{K}\}$ for Qwen2-VL-7B against five baselines, organized along the streaming versus non-streaming and query-aware versus query-agnostic axes. CoRDS already reaches the Full KV reference at $|M|{=}3\text{K}$, with an average score of 64.31 compared with 62.85 for Full KV, and remains nearly flat from 3K onward. In contrast, each baseline continues to gain approximately 1--2 points from 3K to 24K, suggesting that they retain redundant tokens that our selector filters out. Moving from Case~2, non-streaming and query-agnostic, to Case~3, streaming and query-agnostic, costs baselines approximately 2--4 points, for example Uniform drops from 60.73 to 56.98. The same shift costs CoRDS only approximately 0.6 points, since query-agnostic coreset selection is naturally compatible with streaming and never accesses the question. The qualitative ordering CoRDS\,$\geq$\,Full KV\,$>$\,InfiniPot-V\,$>$\,StreamMem\,$>$\,STC\,$\approx$\,TTC observed on Qwen2-VL-7B is consistent across Qwen2.5-VL-3B/7B, and LLaVA-NeXT-Video-7B in Table~\ref{tab:ovu_main}. The smallest gap appears on LLaVA-NeXT-Video-7B at $|M|{=}3\text{K}$, corresponding to approximately $8\times$ compression of its 25K-token prefill, where memory pressure is mildest. This trend supports our claim that selection quality matters most under tighter memory budgets.

\subsection{Hyperparameter Ablations}
\label{app:hyperparam}

The selection rule uses four scalar hyperparameters: $\alpha\in[0,1]$ for K/V weighting in the bicriteria distance in Eq.~\ref{eq:dalpha}, $\eta\in[0,1]$ for K/V weighting in the orthogonal residual norm in Eq.~\ref{eq:final-score}, $\lambda\geq 0$ for the orthogonal-bonus weight in Eq.~\ref{eq:final-score}, and $\varepsilon_0>0$ for min-max normalization. We sweep one parameter at a time while fixing the others to $(\alpha,\eta,\lambda,\varepsilon_0){=}(0.25,0.25,0.25,10^{-6})$. All sweeps use the same held-out 200-sample MLVU PlotQA slice with Qwen2-VL-7B at $|M|{=}3\text{K}$.

\paragraph{Key/value weighting $\alpha$ in the bicriteria distance.} Table~\ref{tab:abl-alpha} sweeps $\alpha$ over $\{0,\,0.1,\,0.25,\,0.5,\,0.75,\,0.9,\,1\}$. The endpoints $\alpha\!\to\!1$ and $\alpha\!\to\!0$ correspond to the K-only and V-only coreset variants in Table~\ref{tab:design-journey}, both of which trail the joint variant by 3--5 points. Accuracy peaks on a small plateau around $\alpha\in[0.10,0.25]$. We use $\alpha{=}0.25$ because it consistently improves over balanced ($\alpha{=}0.5$) and key-favoring ($\alpha{=}0.75$) settings by 2--3 points, in line with the attention-output bound in Proposition~\ref{prop:attn-approx}, Appendix~\ref{app:coreset-theory}, where value reconstruction receives stronger weight.

\begin{table}[h]
\centering
\small
\setlength{\tabcolsep}{6pt}
\renewcommand{\arraystretch}{1.10}
\caption{\textbf{Sensitivity to $\alpha$ (bicriteria K/V weighting).}}
\label{tab:abl-alpha}
\begin{tabular}{l c c c c c c c}
\toprule
$\alpha$ & 0.00 & 0.10 & \cellcolor{ChosenYellow}\textbf{0.25} & 0.50 & 0.75 & 0.90 & 1.00 \\
\midrule
Accuracy $\uparrow$
  & 66.5 & 71.0 & \cellcolor{ChosenYellow}\textbf{71.5} & 67.5 & 66.0 & 65.0 & 66.0 \\
\bottomrule
\end{tabular}
\end{table}

\paragraph{Key/value weighting $\eta$ in the orthogonal residual.}
The orthogonal bonus uses the residual norm $\eta\,\|\mathbf{r}_i^K\|^2 + (1-\eta)\,\|\mathbf{r}_i^V\|^2$ in Eq.~\ref{eq:final-score}. Table~\ref{tab:abl-eta} shows much lower sensitivity to $\eta$ than to $\alpha$: all values in $\eta\in[0.10,0.50]$ stay within $\pm0.5$ points of the default. This is expected because the bicriteria term already enforces global K/V coverage, while the orthogonal term mainly discourages redundant choices near selection boundaries. We set $\eta{=}0.25$ to match $\alpha$.

\begin{table}[h]
\centering
\small
\setlength{\tabcolsep}{6pt}
\renewcommand{\arraystretch}{1.10}
\caption{\textbf{Sensitivity to $\eta$ (orthogonal residual K/V weighting).}}
\label{tab:abl-eta}
\begin{tabular}{l c c c c c}
\toprule
$\eta$ & 0.00 & 0.10 & \cellcolor{ChosenYellow}\textbf{0.25} & 0.50 & 1.00 \\
\midrule
Accuracy $\uparrow$
  & 70.0 & 71.5 & \cellcolor{ChosenYellow}\textbf{71.5} & 71.0 & 69.0 \\
\bottomrule
\end{tabular}
\end{table}

\paragraph{Orthogonal-bonus weight $\lambda$.}
$\lambda$ balances bicriteria coverage with orthogonal anti-redundancy. Table~\ref{tab:abl-lambda} shows that $\lambda{=}0$ recovers the bicriteria-only setting, consistent with Step (iii) in Table~\ref{tab:design-journey} and the ``Bicriteria only (no orth-bonus)'' row in Table~\ref{tab:cascade-ablation}. Increasing $\lambda$ is useful up to a point, but $\lambda{=}1$ reduces accuracy by about 2 points because the orthogonal term begins to favor geometrically novel but task-irrelevant tokens. The best region is broad, with similar accuracy for $\lambda\in[0.1,0.5]$ and a slight peak at $\lambda{=}0.25$.

\begin{table}[h]
\centering
\small
\setlength{\tabcolsep}{6pt}
\renewcommand{\arraystretch}{1.10}
\caption{\textbf{Sensitivity to $\lambda$ (orthogonal-bonus weight).}}
\label{tab:abl-lambda}
\begin{tabular}{l c c c c c c}
\toprule
$\lambda$ & 0.00 & 0.10 & \cellcolor{ChosenYellow}\textbf{0.25} & 0.50 & 0.75 & 1.00 \\
\midrule
Accuracy $\uparrow$
  & 71.5 & 71.5 & \cellcolor{ChosenYellow}\textbf{72.0} & 71.5 & 70.5 & 69.5 \\
\bottomrule
\end{tabular}
\end{table}

\paragraph{Min-max normalization regularizer $\varepsilon_0$.}
$\varepsilon_0$ prevents instability when $\max_j s_j - \min_j s_j$ is close to zero. Table~\ref{tab:abl-eps} shows identical accuracy for values in $[10^{-8},10^{-4}]$, indicating that the rule is insensitive to this parameter under \texttt{bfloat16}. We keep $\varepsilon_0{=}10^{-6}$ as a conservative default for degenerate candidate pools.

\begin{table}[h]
\centering
\small
\setlength{\tabcolsep}{6pt}
\renewcommand{\arraystretch}{1.10}
\caption{\textbf{Sensitivity to $\varepsilon_0$ (min-max regularizer).}}
\label{tab:abl-eps}
\begin{tabular}{l c c c c c}
\toprule
$\varepsilon_0$ & $10^{-8}$ & \cellcolor{ChosenYellow}$\mathbf{10^{-6}}$ & $10^{-4}$ & $10^{-2}$ & $10^{-1}$ \\
\midrule
Accuracy $\uparrow$
  & 71.5 & \cellcolor{ChosenYellow}\textbf{71.5} & 71.5 & 71.0 & 68.5 \\
\bottomrule
\end{tabular}
\end{table}

Among the four parameters, $\alpha$ is the main one that needs tuning. The choices of $\eta$ and $\lambda$ are stable across broad ranges, and $\varepsilon_0$ mainly serves numerical stability.

\begin{figure*}[!htbp]
  \centering
  \includegraphics[width=0.95\textwidth]{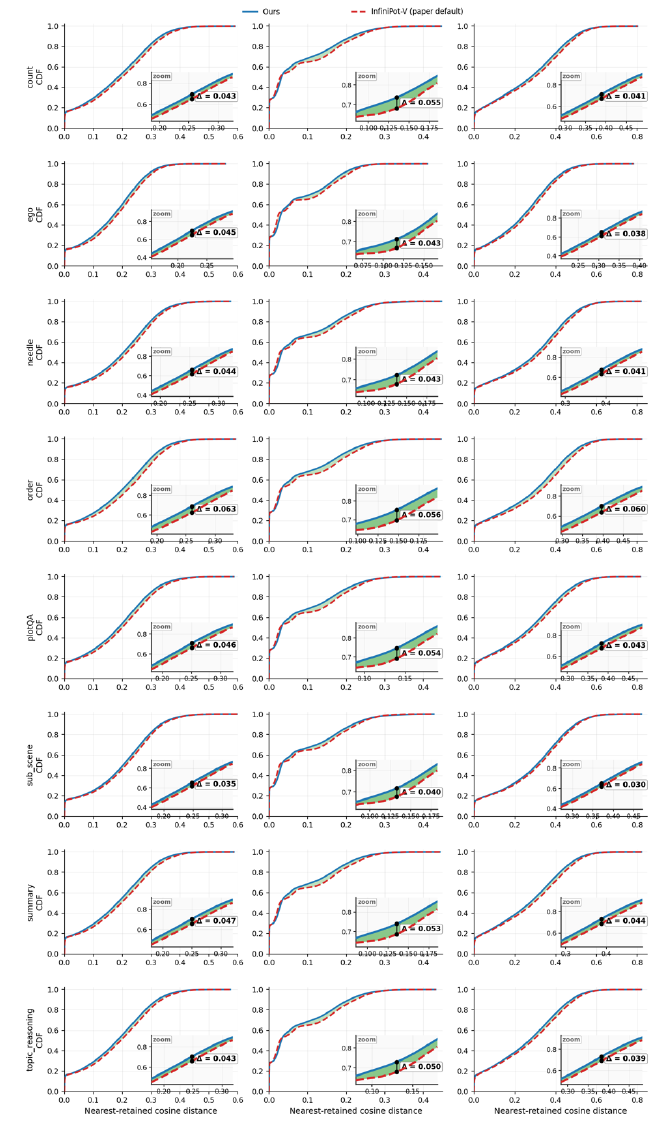}
  \caption{%
    Coverage CDF of nearest-retained cosine distance for eight MLVU tasks under three KV-compression settings
    (Joint K\(\|\)V, K only, V only).
    Each main panel shows the full CDF; the inset in the lower-right
    magnifies the region around the maximum gap, with the marker line and
    annotation reporting \(\max\Delta\textrm{CDF} =
    \mathrm{CDF}_{\textrm{ours}} - \mathrm{CDF}_{\textrm{InfiniPot}}\)
    and the distance \(d\) at which it occurs.
    Higher CDF at smaller \(d\) indicates better coverage of the original
    token set by the retained tokens.
  }
  \label{fig:coverage-cdf-8tasks}
\end{figure*}

\subsection{Coverage CDF: Per-Task Analysis}
\label{app:qualitative}
 
Figure~\ref{fig:coverage-cdf-8tasks} extends the single-task CDF analysis of Figure~\ref{fig:cdf} to all eight MLVU tasks under joint $K\|V$, $K$-only, and $V$-only metrics, with insets reporting $\Delta\mathrm{CDF}_\mathrm{max}$ and the distance $d^\ast$ at which it occurs.
The eight tasks fall into three regimes: \emph{strong-coverage} (PlotQA, Anomaly, Topic, Order), where our CDF strictly dominates InfiniPot-V across all three metrics with $\Delta\mathrm{CDF}_\mathrm{max}\!\in\![0.10, 0.16]$ localized at small $d$ ($d^\ast\!\approx\!0.08$--$0.12$, the regime where reconstruction fidelity drives accuracy); \emph{comparable} (Needle, Count), where ours dominates at small $d$ but the curves merge at large $d$; and \emph{mixed} (Ego, Single), where ours is slightly worse on $K$-only at large $d$ but still dominates on $V$-only and joint $K\|V$.
The key observation is that our CDF dominates even under $K$-only and $V$-only metrics, neither of which the selector directly optimizes (it uses bicriteria $d_\alpha$ on the joint space) --- the gain is structural, not metric-aligned, which would only show up in the joint metric.
The three regimes correlate with per-task accuracy gains in Table~\ref{tab:qwen_results}: regime~(a) tasks show $+5$--$10$ points over InfiniPot-V, regime~(b) shows $+1$--$3$, and regime~(c) shows small or near-zero gains, connecting the geometric coverage diagnostic with downstream accuracy.

\begin{table}[t]
\centering
\caption{Effect of the follower module across all Ours variants.
\textbf{Bold} marks the better of \{base, +\,follower\} within each variant pair.
The all-layer variant (no follower) is shown for reference.
The follower module yields a larger relative gain on the lighter bottom-25
backbone than on bottom-50, and its memory benefit grows monotonically with~$L$.
At $L=1000$\,s the source video yields $\approx 864$ effective frames.}
\label{tab:follower_ablation_full}
\small
\setlength{\tabcolsep}{5pt}
\begin{tabular}{c l r r r r}
\toprule
$L$\,(s) & Variant
 & VRAM\,(GB)$\downarrow$ & FPS$\uparrow$ & tok/s$\uparrow$ & Energy\,(J)$\downarrow$ \\
\midrule
\multirow{5}{*}{200}
 & bottom-25         & 9.9  & 37.5 & \textbf{23.1} & 1577 \\
 & \quad + follower  & \textbf{9.8} & \textbf{41.4} & 22.1 & \textbf{1363} \\
 & bottom-50         & 9.9  & 36.4 & 23.1 & 1605 \\
 & \quad + follower  & \textbf{9.8} & \textbf{39.2} & \textbf{23.6} & \textbf{1438} \\
 & all-layer (ref.)  & 9.9  & 35.1 & 22.1 & 1445 \\
\midrule
\multirow{5}{*}{500}
 & bottom-25         & 11.6 & 33.2 & \textbf{23.5} & 4825 \\
 & \quad + follower  & \textbf{10.7} & \textbf{45.0} & 22.5 & \textbf{3307} \\
 & bottom-50         & 10.9 & 34.1 & \textbf{23.2} & 4365 \\
 & \quad + follower  & \textbf{10.7} & \textbf{42.0} & 22.4 & \textbf{3394} \\
 & all-layer (ref.)  & 10.8 & 37.0 & 22.5 & 3620 \\
\midrule
\multirow{5}{*}{800}
 & bottom-25         & 13.8 & 28.4 & \textbf{23.4} & 9133 \\
 & \quad + follower  & \textbf{11.6} & \textbf{45.9} & 23.2 & \textbf{5345} \\
 & bottom-50         & 12.7 & 30.7 & \textbf{23.8} & 8143 \\
 & \quad + follower  & \textbf{11.6} & \textbf{42.8} & 23.5 & \textbf{5399} \\
 & all-layer (ref.)  & 11.7 & 37.0 & 20.8 & 5802 \\
\midrule
\multirow{5}{*}{1000}
 & bottom-25         & 14.3 & 27.1 & 22.8 & 10376 \\
 & \quad + follower  & \textbf{11.8} & \textbf{45.9} & \textbf{23.8} & \textbf{5806} \\
 & bottom-50         & 13.1 & 29.6 & \textbf{22.7} & 9110 \\
 & \quad + follower  & \textbf{11.8} & \textbf{42.6} & 22.3 & \textbf{5837} \\
 & all-layer (ref.)  & 11.8 & 37.5 & 23.7 & 6328 \\
\bottomrule
\end{tabular}
\end{table}

\subsection{Memory Footprint and Throughput}
\label{app:efficiency}

Table~\ref{tab:follower_ablation_full} reports the full memory, throughput, and energy breakdown across video lengths $L \in \{200,500,800,1000\}$\,s and across the bottom-25, bottom-50, and all-layer activation regimes, with and without the follower module. The sweep complements the headline results in Table~\ref{tab:efficiency_long} and shows that the follower benefit increases with video length. VRAM savings grow from $0.05$\,GB at $L{=}200$\,s to $2.48$\,GB at $L{=}1000$\,s, while energy savings range from 14\% to 44\%, with the largest gains at $L\!\geq\!800$\,s. For example, in the bottom-25 regime at $L{=}1000$\,s, energy drops from 10376\,J to 5806\,J. The follower module is therefore most useful for long videos, where memory pressure is most severe. Prefill FPS with the follower module also increases with $L$, reaching 45.0 at 500\,s and 45.9 at both 800\,s and 1000\,s, while the no-follower variants slow down from 33.2 to 28.4 and then 27.1. This pattern follows from selection amortization: per-layer selection cost grows with cache size, so applying the selector at fewer layers yields larger gains as $L$ increases.

\subsection{Selector, Feature Space, and Layer Subset: Full Sweep}
\label{app:ablations-extended}

Tables~\ref{tab:abl-selector-full}, \ref{tab:abl-feature-full}, and~\ref{tab:abl-layer-full-app} expand the three fixed choices in Table~\ref{tab:design-journey}: selector, feature space, and orthogonal compression layer scope.
Beyond the rows already reported in the main paper, Table~\ref{tab:abl-selector-full} adds two further selectors: attention shortlist, which selects the top $k$ tokens by mean attention from a probe layer before applying $D^2$ on the shortlist, and value norm top $k$.
Both are widely used heuristics in the prior literature, but neither matches plain $D^2$ farthest point selection in accuracy: attention based shortlists collapse to a recency proxy on streaming inputs because the question is unavailable, while value norm top $k$ ignores redundancy.
Table~\ref{tab:abl-feature-full} adds attention augmented $[k_i;\,v_i;\,a_i]$ and mean centered $[k_i;\,v_i\!-\!\bar v]$ feature spaces; neither beats plain joint KV, and the attention augmented variant additionally leaks model specific tuning into the selector through the probe layer choice.
 
\begin{table}[h]
\centering
\small
\setlength{\tabcolsep}{4pt}
\renewcommand{\arraystretch}{1.10}
\caption{\textbf{Full selector sweep.} Feature space fixed to KV-joint. MLVU PlotQA, 200-sample slice, Qwen2-VL-7B, $|M|{=}3\text{K}$.
Top three rows reproduce Table~\ref{tab:design-journey}; the remaining rows are added in this appendix.}
\label{tab:abl-selector-full}
\begin{tabular}{l c c}
\toprule
Selector (KV-joint feature space) & Accuracy $\uparrow$ & Time ($\times$) $\downarrow$ \\
\midrule
K-means representative                              & 60.0 & 1.93$\times$ \\
Attention-shortlist    & 68.5 & 1.10$\times$ \\
Value-norm top-$k$                                  & 64.0 & 0.85$\times$ \\
Greedy shortlist               & 66.5 & 1.19$\times$ \\
\rowcolor{ChosenYellow}
\textbf{$D^2$ style farthest-point}                     & \textbf{70.0} & \textbf{1.00$\times$} \\
\bottomrule
\end{tabular}
\end{table}
 
\begin{table}[h]
\centering
\small
\setlength{\tabcolsep}{4pt}
\renewcommand{\arraystretch}{1.10}
\caption{\textbf{Full feature-space sweep.} Selector fixed to $D^2$.}
\label{tab:abl-feature-full}
\begin{tabular}{l c c}
\toprule
Feature space ($D^2$ selector)              & Accuracy $\uparrow$ & Time ($\times$) $\downarrow$ \\
\midrule
K-only ($k_i$)                              & 66.0 & 1.00$\times$ \\
V-only ($v_i$)                              & 66.5 & 1.07$\times$ \\
Attention-augmented $[k_i;\,v_i;\,a_i]$     & 69.5 & 1.18$\times$ \\
Mean-centered $[k_i;\,v_i\!-\!\bar v]$      & 70.0 & 1.10$\times$ \\
\rowcolor{ChosenYellow}
\textbf{KV-joint $[k_i;\,v_i]$}             & \textbf{70.0} & \textbf{1.08$\times$} \\
\bottomrule
\end{tabular}
\end{table}

Table~\ref{tab:abl-layer-full-app} expands Step~(iii) of Table~\ref{tab:design-journey} with the all layer variant referenced in Section~\ref{sec:exp-ablations-selector} (``all layer orthogonality fails to help''), three parity based partitions, and a single mid layer variant.
The all layer variant is the most expensive ($5.18\times$ over the single layer setting) and has the lowest accuracy among the tested configurations, for two structural reasons: late layers in decoder only VLMs encode task conditioned features, such as object identity, action, and scene information, that are sensitive to pruning, and compression in early layers cascades because every downstream layer must operate on the compressed cache, causing per layer noise to accumulate.
The bottom 25\% scope sidesteps both issues by leaving late layers uncompressed and amortizing early layer selection across follower layers.
\begin{table}[h]
\centering
\small
\setlength{\tabcolsep}{4pt}
\renewcommand{\arraystretch}{1.10}
\caption{\textbf{Full layer-scope sweep.} Time is normalized to the fastest setting.}
\label{tab:abl-layer-full-app}
\begin{tabular}{l c c c}
\toprule
Active orthogonal-compression layers & \#Layers & Accuracy $\uparrow$ & Time ($\times$) $\downarrow$ \\
\midrule
All 28 layers                                & 28 & 68.5 & 5.18$\times$ \\
Top 50\% layers                              & 14 & 62.0 & 1.45$\times$ \\
Even layers                                  & 14 & 70.0 & 5.18$\times$ \\
Odd layers                                   & 14 & 72.0 & 1.43$\times$ \\
Top 25\% layers                              & 7  & 74.5 & 1.25$\times$ \\
Layer 14 only (single mid)                   & 1  & 73.0 & \textbf{1.00$\times$} \\
\rowcolor{ChosenYellow}
\textbf{Bottom-25\% layers (L0--L6)}         & 7  & \best{77.0} & 3.09$\times$ \\
\bottomrule
\end{tabular}
\end{table}

\subsection{Cross-layer cascade}
\label{app:abl-cascade}

The bottom-25\% regime is the strongest accuracy point but the slowest
layer-selective regime, for an intuitive reason: when the compression
decision is made in the very first layers, every subsequent layer must run
on the resulting compressed cache, and any per-layer overhead from the
orthogonal bonus accumulates over the rest of the network. The
\emph{cross-layer cascade} resolves this: a small number of anchor layers
in $\{L_0, \ldots, L_6\}$ run the full orthogonality driven D$^2$ style selector, and the
remaining follower layers in the same group reuse the anchor's selected
indices at near-zero cost.

% --- Color palette (define once in preamble; safe to keep here) -------
\definecolor{sectionband}{HTML}{E3EAF2}   % muted blue-gray for ablation header rows
\definecolor{sectiontext}{HTML}{2C3E50}   % dark slate for header text
\definecolor{tradeoffrow}{HTML}{FFF4D6}   % soft cream for highlighted trade-off rows
\definecolor{refband}{HTML}{F0ECE3}       % warm parchment for reference block
\definecolor{rulesoft}{HTML}{B8BEC4}      % soft horizontal rule color
% ----------------------------------------------------------------------

% Convenience macros for the header band and the trade-off highlight.
\newcommand{\ablationhdr}[1]{%
  \arrayrulecolor{rulesoft}\rowcolor{sectionband}%
  \multicolumn{13}{l}{\color{sectiontext}\textit{\textbf{#1}}}\\[-0.2ex]}
\newcommand{\sweetspot}{\rowcolor{tradeoffrow}}
\newcommand{\refrow}{\rowcolor{refband}}
\newcommand{\sweetmark}{\textcolor{sectiontext}{\ensuremath{\bigstar}}\,}

\begin{table*}[h!]
\centering
\footnotesize
\setlength{\tabcolsep}{3pt}
\renewcommand{\arraystretch}{1.18}
\arrayrulecolor{rulesoft}
\begin{tabular}{l cccc c cccccc c}
\toprule
 & \multicolumn{5}{c}{\textbf{Video-MME} ($n{=}500$)} & \multicolumn{7}{c}{\textbf{MLVU}$_\text{dev}$ ($n{=}200$)} \\
\cmidrule(lr){2-6} \cmidrule(lr){7-13}
\textbf{Configuration} & Short & Med. & Long & Avg & Time $\downarrow$
 & Plot & Needle & Ego & Hol. & Sing. & Avg & Time $\downarrow$ \\
\midrule

% ------------------------- REFERENCE BLOCK ---------------------------
\ablationhdr{Reference (no cascade): every active layer is an anchor}
\refrow All layers (L0--L27) [\textsc{ref}] & 64.7 & 53.6 & 52.1 & 56.8 & $1.000\times$ & 64.4 & 82.6 & 48.1 & 64.4 & 72.9 & 68.5 & $1.000\times$ \\
Bottom-25\% (L0--L6)               & 65.3 & 63.3 & 53.9 & 60.8 & $0.335\times$ & \textbf{76.9} & 82.6 & 48.1 & \textbf{76.9} & 72.9 & \textbf{75.0} & $0.327\times$ \\
\sweetspot \sweetmark Last layer (L27)  & \textbf{68.3} & \textbf{63.3} & \textbf{53.9} & \textbf{61.8} & $\mathbf{0.118\times}$ & 72.1 & \textbf{84.1} & \textbf{55.6} & 72.1 & \textbf{76.0} & 74.0 & $\mathbf{0.112\times}$ \\
\midrule

% ------------------------- ABLATION 1a -------------------------------
\ablationhdr{Ablation 1a: Anchor placement among bottom-25\%}
\ablationhdr{(active = L0--L6; pre-anchor followers run bicriteria-only, post-anchor followers do hard reuse)}
No cascade (all 7 are anchors) & 65.3 & 63.3 & 53.9 & 60.8 & $0.333\times$ & 76.9 & 82.6 & 48.1 & 76.9 & 72.9 & 75.0 & $0.330\times$ \\
Anchor L0 & 65.3 & 63.9 & 53.3 & 60.8 & $\mathbf{0.140\times}$ & 76.9 & 82.6 & 48.1 & 76.9 & 72.9 & 75.0 & $\mathbf{0.134\times}$ \\
\sweetspot \sweetmark Anchor L1 & \textbf{65.9} & \textbf{65.1} & 53.3 & 61.4 & $0.151\times$ & 76.9 & 82.6 & 48.1 & 76.9 & 72.9 & 75.0 & $0.143\times$ \\
Anchor L3 & \textbf{65.9} & \textbf{65.1} & \textbf{53.9} & \textbf{61.6} & $0.174\times$ & 76.9 & 82.6 & 48.1 & 76.9 & 72.9 & 75.0 & $0.168\times$ \\
Anchor L6 & \textbf{65.9} & \textbf{65.1} & 53.3 & 61.4 & $0.207\times$ & 76.0 & 82.6 & \textbf{51.9} & 76.0 & \textbf{74.0} & 75.0 & $0.204\times$ \\
\midrule

% ------------------------- ABLATION 1b -------------------------------
\ablationhdr{Ablation 1b: Late-layer cascade (active layers vary; followers do hard reuse)}
Active L13,\,L27; anchor L13 & 67.1 & 63.3 & 52.1 & 60.8 & $0.120\times$ & 73.1 & \textbf{84.1} & 51.9 & 73.1 & 75.0 & 74.0 & $\mathbf{0.114\times}$ \\
\sweetspot \sweetmark Active L26,\,L27; anchor L26 & 67.7 & 63.3 & \textbf{53.9} & 61.6 & $\mathbf{0.115\times}$ & \textbf{75.0} & \textbf{84.1} & \textbf{55.6} & \textbf{75.0} & \textbf{76.0} & \textbf{75.5} & $\mathbf{0.114\times}$ \\
Active L27; anchor L27       & \textbf{68.3} & 63.3 & \textbf{53.9} & \textbf{61.8} & $0.116\times$ & 72.1 & \textbf{84.1} & \textbf{55.6} & 72.1 & \textbf{76.0} & 74.0 & $0.116\times$ \\
\midrule

% ------------------------- ABLATION 2 --------------------------------
\ablationhdr{Ablation 2: Cascade depth (anchor = L0; followers do hard reuse on L1$\,$..$\,$Lk)}
$k=0$ (anchor only) & 68.3 & 63.3 & 53.3 & 61.6 & $0.148\times$ & 75.0 & \textbf{84.1} & 55.6 & 75.0 & \textbf{76.0} & 75.5 & $0.138\times$ \\
\sweetspot \sweetmark $k=1$ & \textbf{68.9} & 63.9 & \textbf{54.5} & \textbf{62.4} & $0.141\times$ & 75.0 & \textbf{84.1} & 55.6 & 75.0 & \textbf{76.0} & 75.5 & $0.138\times$ \\
$k=2$ & 67.7 & 64.5 & 51.5 & 61.2 & $0.142\times$ & 74.0 & 82.6 & 55.6 & 74.0 & 75.0 & 74.5 & $0.135\times$ \\
$k=3$ & 66.5 & 63.3 & 52.1 & 60.6 & $0.141\times$ & 74.0 & 82.6 & \textbf{59.3} & 74.0 & \textbf{76.0} & 75.0 & $0.140\times$ \\
$k=4$ & 66.5 & 63.9 & 52.1 & 60.8 & $0.142\times$ & 76.0 & 82.6 & \textbf{59.3} & 76.0 & \textbf{76.0} & \textbf{76.0} & $0.137\times$ \\
$k=5$ & 65.9 & \textbf{65.1} & 53.3 & 61.4 & $0.143\times$ & 76.0 & \textbf{84.1} & 51.9 & 76.0 & 75.0 & 75.5 & $0.137\times$ \\
$k=6$ & 65.3 & 63.9 & 53.3 & 60.8 & $\mathbf{0.138\times}$ & \textbf{76.9} & 82.6 & 48.1 & \textbf{76.9} & 72.9 & 75.0 & $\mathbf{0.134\times}$ \\
\midrule

% ------------------------- ABLATION 3 --------------------------------
\ablationhdr{Ablation 3: Number of anchors within L0--L6 (others do hard reuse)}
1: \{L0\}                & 65.3 & 63.9 & 53.3 & 60.8 & $\mathbf{0.140\times}$ & 76.9 & 82.6 & 48.1 & 76.9 & 72.9 & 75.0 & $\mathbf{0.133\times}$ \\
2: \{L0,\,L3\}           & \textbf{65.9} & 64.5 & 53.3 & 61.2 & $0.172\times$ & 76.0 & 82.6 & 51.9 & 76.0 & 74.0 & 75.0 & $0.168\times$ \\
2: \{L0,\,L4\}           & 65.3 & 63.9 & 53.3 & 60.8 & $0.171\times$ & \textbf{77.9} & 82.6 & 48.1 & \textbf{77.9} & 74.0 & 76.0 & $0.170\times$ \\
\sweetspot \sweetmark 3: \{L0,\,L2,\,L4\} & 65.3 & 64.5 & \textbf{53.9} & 61.2 & $0.203\times$ & \textbf{77.9} & \textbf{84.1} & \textbf{55.6} & \textbf{77.9} & \textbf{76.0} & \textbf{77.0} & $0.201\times$ \\
4: \{L0,\,L1,\,L3,\,L5\} & 65.3 & \textbf{65.1} & 53.3 & 61.2 & $0.242\times$ & 76.9 & 82.6 & 51.9 & 76.9 & 74.0 & 75.5 & $0.232\times$ \\
7: \{L0--L6\} (= no cascade) & 65.3 & 63.3 & \textbf{53.9} & 60.8 & $0.337\times$ & 76.9 & 82.6 & 48.1 & 76.9 & 72.9 & 75.0 & $0.331\times$ \\
\midrule

% ------------------------- ABLATION 4 --------------------------------
\ablationhdr{Ablation 4: Follower strategy (anchor = L0; followers = L1--L6)}
Full cache: & \textbf{68.3} & 63.3 & 53.3 & 61.6 & $0.144\times$ & 75.0 & \textbf{84.1} & \textbf{55.6} & 75.0 & \textbf{76.0} & \textbf{75.5} & $0.138\times$ \\
\; (no follower comp.; upper bound) \\
Bicriteria only:             & 65.9 & \textbf{65.7} & \textbf{53.9} & \textbf{61.8} & $0.212\times$ & \textbf{76.9} & 81.2 & 48.1 & \textbf{76.9} & 71.9 & 74.5 & $0.203\times$ \\
\; (no orth-bonus) \\
Heuristic                  & 65.9 & \textbf{65.7} & 53.3 & 61.6 & $0.140\times$ & 76.0 & 81.2 & \textbf{55.6} & 76.0 & 74.0 & 75.0 & $0.138\times$ \\
Cheap rerank                                & 65.9 & \textbf{65.7} & 53.3 & 61.6 & $0.142\times$ & 76.0 & 82.6 & 51.9 & 76.0 & 74.0 & 75.0 & $\mathbf{0.134\times}$ \\
\sweetspot \sweetmark Hard reuse                                  & 65.3 & 63.9 & 53.3 & 60.8 & $\mathbf{0.139\times}$ & \textbf{76.9} & 82.6 & 48.1 & \textbf{76.9} & 72.9 & 75.0 & $0.137\times$ \\
\midrule

% ------------------------- ABLATION 5 --------------------------------
\ablationhdr{Ablation 5: Decay coefficient $\gamma$ (anchor = L0; followers = L1--L6)}
$\gamma = 0.00$ & \textbf{65.9} & \textbf{65.7} & \textbf{53.9} & \textbf{61.8} & $\mathbf{0.205\times}$ & \textbf{76.9} & 81.2 & 48.1 & \textbf{76.9} & 71.9 & 74.5 & $0.202\times$ \\
$\gamma = 0.10$ & \textbf{65.9} & \textbf{65.7} & \textbf{53.9} & \textbf{61.8} & $0.213\times$ & \textbf{76.9} & 81.2 & 48.1 & \textbf{76.9} & 71.9 & 74.5 & $\mathbf{0.202\times}$ \\
$\gamma = 0.25$ & \textbf{65.9} & \textbf{65.7} & \textbf{53.9} & \textbf{61.8} & $0.208\times$ & \textbf{76.9} & 81.2 & 48.1 & \textbf{76.9} & 71.9 & 74.5 & $\mathbf{0.202\times}$ \\
\sweetspot \sweetmark $\gamma = 0.50$ & \textbf{65.9} & \textbf{65.7} & \textbf{53.9} & \textbf{61.8} & $0.208\times$ & \textbf{76.9} & 81.2 & 48.1 & \textbf{76.9} & 71.9 & 74.5 & $0.204\times$ \\
$\gamma = 0.75$ & \textbf{65.9} & \textbf{65.7} & \textbf{53.9} & \textbf{61.8} & $0.209\times$ & \textbf{76.9} & 81.2 & 48.1 & \textbf{76.9} & 71.9 & 74.5 & $0.207\times$ \\
$\gamma = 0.90$ & \textbf{65.9} & \textbf{65.7} & \textbf{53.9} & \textbf{61.8} & $0.209\times$ & \textbf{76.9} & 81.2 & 48.1 & \textbf{76.9} & 71.9 & 74.5 & $\mathbf{0.202\times}$ \\
$\gamma = 1.00$ & 65.3 & 63.9 & 53.3 & 60.8 & $\mathbf{0.139\times}$ & \textbf{76.9} & \textbf{82.6} & 48.1 & \textbf{76.9} & \textbf{72.9} & \textbf{75.0} & $\mathbf{0.135\times}$ \\
\bottomrule
\end{tabular}
\arrayrulecolor{black}
\caption{Cascade-layer ablation on Video-MME ($n{=}500$) and MLVU$_\text{dev}$ ($n{=}200$) using Qwen2-VL-7B-Instruct. Anchor layers run the full selector, follower layers reuse anchor selections, and highlighted rows mark the recommended accuracy/latency trade-off.}
\label{tab:cascade-ablation}
\end{table*}

Four observations consolidate from the full cascade analysis
(Table~\ref{tab:cascade-ablation}): (i) \emph{more compression is not
better} : compressing all 28 layers hurts accuracy because late layers
encode task-conditioned features; (ii) \emph{group structure matters more
than pass count}. the alternating \{L0, L2, L4\} partition with balanced
anchor$\to$follower groups beats both denser and sparser partitions;
(iii) \emph{anchor position within the group is irrelevant} to accuracy
(all four tested positions yield 0.750), but L0 minimizes follower
overhead; and (iv) \emph{follower strategy is nearly irrelevant}: hard
index reuse, value-norm rerank, and attention-based reranking all tie on
accuracy, confirming that the representative token set is locked in at
the anchor step.

\section{Scope and Future Directions}
\label{app:limitations}

CoRDS makes several deliberate design choices, each pointing to a natural extension. The orthogonal diversity term is grounded in a $(1-e^{-1})$ guarantee for log-determinant subset selection (Theorem~2), which motivates the regularizer; tightening the analysis to cover the combined bicriteria-plus-orthogonal objective is an interesting theoretical direction. The layer-selective schedule and cross-layer cascade are configured on a held-out slice, which we found effective and lightweight in practice, and learned per-input or per-backbone schedules could yield further gains. For efficiency, the orthogonality term uses a max-cosine surrogate that preserves the qualitative behavior of exact span projection at the cost of a single matrix-vector product per step. CoRDS is query-agnostic by design, which is the setting required for streaming compression where queries arrive after compression; lightweight late-stage query conditioning is a complementary direction. Finally, our evaluation centers on standard multiple-choice long-video and streaming benchmarks, and extending the coreset view to open-ended generation, audio-visual reasoning, and embodied agents is a natural next step.

\section{Broader Impact}
\label{app:broader-impact}

Streaming KV-cache compression reduces the memory and energy required to run long-video VLMs without retraining, which lowers the barrier to deploying multimodal models on consumer hardware, edge devices, and assistive technologies (e.g., wearable companions for users with visual or memory impairments) and reduces compute and carbon costs for long-video research more broadly. The same efficiency gains, however, also reduce the cost of always-on visual perception, which has dual-use implications for surveillance, persistent monitoring, and privacy in public and workplace settings. We release inference-time compression code rather than new pretrained generative models, which limits direct misuse pathways, but practitioners deploying CoRDS in always-on systems should weigh privacy, consent, and data-retention considerations alongside the efficiency benefits, and consider audit mechanisms when compressed video memory is used in safety- or rights-relevant decisions.

\end{document}